\DeclareMathOperator*{\argmax}{arg\,max}
\let\oldproofname=\proofname
\renewcommand{\proofname}{\rm\bf{\oldproofname}}
\newcommand{\benediktx}[1]{{#1}}
\renewcommand\cite{\citep}
\title{Interactive Weak Supervision:\\ Learning Useful Heuristics for Data Labeling}
\author{Benedikt Boecking$^1$, Willie Neiswanger$^2$, Eric P. Xing$^1$, \& Artur Dubrawski$^1$
\vspace{3pt}\\
$^1$Carnegie Mellon University\\
\texttt{\{boecking,epxing,awd\}@cs.cmu.edu}
\vspace{3pt}\\
$^2$Stanford University\\
\texttt{neiswanger@cs.stanford.edu}
}
\begin{document}

\maketitle

\begin{abstract}
Obtaining large annotated datasets is critical for training successful machine learning models and it is often a bottleneck in practice. Weak supervision offers a promising alternative for producing labeled datasets without 
ground truth annotations by generating probabilistic labels using multiple noisy heuristics. This process can scale to large datasets and has demonstrated state of the art performance in diverse domains such as healthcare and e-commerce.
One practical issue with learning from user-generated heuristics is that their creation requires creativity, foresight, and domain expertise from those who hand-craft them, a process which can be tedious and subjective.
We develop the first framework for interactive weak supervision in which a method proposes heuristics and learns from user feedback given on each proposed heuristic.
Our experiments demonstrate that only a small number of feedback iterations are needed to train models that achieve highly competitive test set performance without access to ground truth training labels. We conduct user studies, which show that users are able to effectively provide feedback on heuristics and that test set results track the performance of simulated oracles.
\end{abstract}

\vspace{-2mm}
\section{Introduction}
\label{sec:introduction}
\vspace{-2mm}
The performance of supervised machine learning (ML) hinges on the availability of labeled data in sufficient quantity and quality. However, labeled data for applications of ML can be scarce, and the common process of obtaining labels by having annotators inspect individual samples is often expensive and time consuming. Additionally, this cost is frequently exacerbated by factors such as privacy concerns, required expert knowledge, and shifting problem definitions. 


Weak supervision provides a promising alternative, reducing the need for humans to hand label large datasets to train ML models~\cite{riedel2010modeling,hoffmann2011knowledge,ratner2016data,dehghani2018fidelityweighted}.
A recent approach called data programming~\cite{ratner2016data} combines multiple weak supervision sources by using an unsupervised label model to estimate the latent true class label, an idea that has close connections to modeling  workers in crowd-sourcing~\cite{dawid1979maximum,karger2011iterative,dalvi2013aggregating,zhang2014spectral}. The approach enables subject matter experts to specify \emph{labeling functions (LFs)}---functions that encode domain knowledge and noisily annotate subsets of data, such as user-specified heuristics or external knowledge bases---instead of needing to inspect and label individual samples. These weak supervision approaches have been used on a wide variety of data types such as MRI sequences and unstructured text, and in various domains such as healthcare and e-commerce ~\cite{fries2019weakly,halpern2014using,bach2019snorkel,re2020overton}.
Not only does the use of multiple sources of weak supervision provide a scalable framework for creating large labeled datasets, but it can also be viewed as a vehicle to incorporate high level, conceptual feedback into the data labeling process.

%
%


In data programming, each LF is an imperfect but reasonably accurate heuristic, such as a pre-trained classifier or keyword lookup. For example, for the popular \emph{20 newsgroups} dataset, an LF to identify the class `\textit{sci.space}' may look for the token `\textit{launch}' in documents and would be right about 70\% of the time.
While data programming can be very effective when done right, experts may spend a significant amount of time designing the weak supervision sources~\cite{varma2018snuba} and  must often inspect samples at random to generate ideas~\cite{cohen2019interactive}.
In our \emph{20 newsgroups}  example, we may randomly see a document mentioning `\textit{Salman Rushdie}' and realize that the name of a famous atheist could be a good heuristic to identify posts in `\textit{alt.atheism}'. While such a heuristic seems obvious after the fact, we have to chance upon the right documents to generate these ideas. In practice, coming up with effective LFs becomes difficult after the first few.
Substantial foresight~\cite{ramos2020aw} is required to create a new function that applies to a non-negligible subset of given data, is novel, and adds predictive value.
\begin{figure}[t]
    \centering
    \includegraphics[width=.99\linewidth,trim=0 0 0 0, clip]{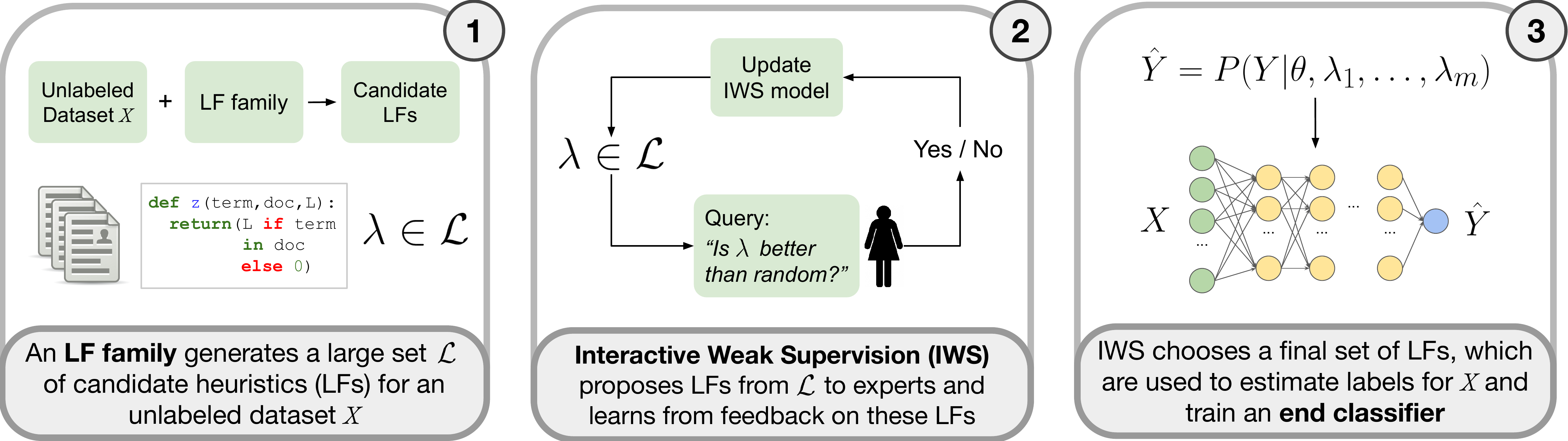}
    \vspace{-2mm}
    \caption{Interactive Weak Supervision (IWS) helps experts discover good labeling functions (LFs).}
    \vspace{-3mm}
    \label{fig:flowchart}
\end{figure}

We propose a new approach for training supervised ML models with weak supervision through an interactive process, supporting domain experts in fast discovery of good LFs.
The method queries users in an active fashion for feedback about candidate LFs, from which a model learns to identify LFs likely to have good accuracy.
Upon completion, our approach produces a final set of LFs. We use this set to create an estimate of the latent class label via an unsupervised label model and train a final, weakly supervised end classifier using a noise aware loss function on the estimated labels as in~\citet{ratner2016data}.
The approach relies on the observation that many applications allow for heuristics of varying quality to be generated at scale (similar to~\citet{varma2018snuba}), and that experts can provide good judgment by identifying some LFs that have reasonable accuracy. The full pipeline of the proposed approach, termed \emph{Interactive Weak Supervision (IWS)}\footnote{Code is available at \url{https://github.com/benbo/interactive-weak-supervision}}, is illustrated in Fig.~\ref{fig:flowchart}. Our  contributions are:

\begin{enumerate}[topsep=0mm, parsep=0mm, leftmargin=5mm]
    \item We propose, to the best of our knowledge, the first interactive method for weak supervision in which queries to be annotated are not data points but labeling functions. This approach automates the discovery of useful data labeling heuristics.
    
    \item We conduct experiments with real users on three classification tasks, using both text and image datasets.
    Our results support our modeling assumptions, demonstrate competitive test set 
    performance of the downstream end classifier,
    and show that users can provide accurate feedback on automatically generated LFs.
    
    \item In our results, IWS shows superior performance compared to standard active learning, i.e. we achieve better test set performance with a smaller number of queries to users.
    In text experiments with real users, IWS achieves a mean test set AUC after 200 LF annotations that requires at least three times as many active learning iterations annotating data points. In addition, the average user response time for LF queries was shorter than for the active learning queries on data points.
\end{enumerate}

\vspace{-3mm}
\section{Related Work}
\label{sec:relatedwork}
\vspace{-2mm}
Active strategies for weak supervision sources have largely focused on combinations of data programming with traditional active learning on data points, while our work has similarities to active learning on features~\cite{druck2009active} and active learning of virtual evidence~\cite{lang2021self}. 
In \citet{nashaat2018hybridization}, a pool of samples is created on which LFs disagree,
and active learning strategies are then applied to obtain labels for some of the samples.
In \citet{cohen2019interactive}, samples where LFs abstain or disagree most are selected and presented to users in
order to inspire the creation of new LFs.
%
In \citet{hancock2018training}, natural language explanations provided during text
labeling are used to generate heuristics. The proposed system uses a semantic parser
to convert explanations into logical forms, which represent labeling functions. 

Prior work has emphasized that LFs defined by experts frequently have a recurring
structure in which elements are swapped to change the higher level concept a function
corresponds to~\cite{varma2018snuba,varma2017inferring,bach2019snorkel}. As an example,
in tasks involving text documents, LFs often follow a repetitive structure in which key
terms or phrases and syntactical relationships change, e.g.~mentions of specific 
words~\cite{varma2018snuba,cohen2019interactive,varma2019learning}.  Prior work relies
on this observation to create heuristic generators \cite{varma2018snuba}, LF templates 
\cite{bach2019snorkel}, and domain-specific primitives \cite{varma2017inferring}. In
particular, in a semi-supervised data programming setting, \citet{varma2018snuba}
propose a system for automatic generation of labeling functions without user interaction,
by using a small set of labeled data. 

Additional related work has investigated weak supervision for neural networks in information retrieval~\cite{dehghani2017neural,zamani2018neural,zamani2018theory}, the modeling of dependencies among heuristics in data programming~\cite{bach2017learning,varma2019learning}, the multi-task data programming setting~\cite{ratner2019training}, handling of multi-resolution sources~\cite{sala2019multi}, the use of noisy pairwise labeling functions~\cite{boecking2019pairwise}, addressing latent subsets in the data~\cite{varma2016socratic}, LFs with noisy continuous scores~\cite{chatterjee2020robust},
and fast model iteration via the use of pre-trained embeddings~\cite{chen2020train}.



\vspace{-2mm}
\section{Methods}
\label{sec:methods}
\vspace{-2mm}
We propose an interactive weak supervision (IWS) approach to assist experts in finding good labeling functions (LFs) for training a classifier on datasets without ground truth labels. 
We will first describe the general problem setting of learning to classify without ground truth samples by modeling multiple weak supervision sources, as well as the concept of LF families. We then dive into the details of the proposed IWS approach.  For brevity, we limit the scope of the end classifier to binary classification, but the presented background and ideas do extend to the multi-class settings.



\vspace{-1mm}
\subsection{Preliminaries}
\label{sec:preliminaries}
\vspace{-1mm}
\paragraph{Learning with Multiple Weak Supervision Sources}
Assume each data point $x \in \mathcal{X}$ has a latent class
label $y^* \in \mathcal{Y} = \{-1,1\}$.
Given $n$ unlabeled, i.i.d. datapoints $X = \{x_i\}_{i=1}^n$, our goal is to train an end
classifier
$f: \mathcal{X} \rightarrow \mathcal{Y}$ such that $f(x) = y^*$.
In data programming~\cite{ratner2016data,ratner2020snorkel}, a user provides
$m$ LFs $\{\lambda_j\}_{j=1}^m$, where $\lambda_j: \mathcal{X} \rightarrow \mathcal{Y} \cup \{0\}$. An LF $\lambda_j$ noisily labels the data with $\lambda_j(x) \in \mathcal{Y}$ or abstains with $\lambda_j(x)=0$.
The corresponding LF output matrix is $ \Lambda \in \{-1,0,1\}^{n \times m}$, where $\Lambda_{i,j} = \lambda_j(x_i)$.
In this paper, we assume that each LF $\lambda_j$ has the same accuracy on each class, $\alpha_j = P(\lambda_j(x)=y^* | \lambda_j(x) \ne 0 )$, where accuracy is defined on items where $j$ does not abstain. Further, we denote by $l_j = P(\lambda_j(x) \neq 0)$ the LF propensity (sometimes called LF coverage), i.e. the frequency at which LF $j$ does not abstain. 

In data programming, an unsupervised label model $p_{\theta}( Y, \Lambda )$ produces probabilistic estimates of the latent class labels $Y^* = \{y_i^*\}_{i=1}^n$ using the observed LF outputs $\Lambda$ by modeling the LF accuracies, propensities, and possibly their dependencies. A number of label model approaches exist in the crowd-sourcing~\cite{dawid1979maximum,zhang2014spectral} and the weak supervision literature~\cite{ratner2020snorkel}. In this paper, we use a factor graph as proposed in~\citet{ratner2016data,ratner2020snorkel} to obtain probabilistic labels by modeling the LF accuracies via factor $\phi_{i,j}^{Acc}(\Lambda,Y) \triangleq \mathbbm{1}\{\Lambda_{ij}=y_i\}$ and labeling propensity by factor $\phi_{i,j}^{Lab}(\Lambda,Y) \triangleq \mathbbm{1}\{\Lambda_{ij}\neq 0\}$, and for simplicity assume LFs are independent conditional on $Y$. The label model is defined as
\begin{align}\label{eq:labelmodel}
p_{\theta}( Y, \Lambda  )
\triangleq Z_{\theta}^{-1}
\exp \left(\sum_{i=1}^n \theta^\top \phi_i(\Lambda_i,y_i)\right),
\end{align}
where $Z_{\theta}$ is a normalizing constant and $\phi_i(\Lambda_i,y_i)$ defined to be the concatenation of the factors for all LFs $j = 1,\dots, m$ for sample $i$. 
We learn $\theta$ by minimizing the negative log marginal likelihood given the observed $\Lambda$. Finally, following \citet{ratner2016data} an end classifier $f$ is trained using probabilistic labels $p_{\theta}( Y| \Lambda  )$.

%

\vspace{-2mm}
\paragraph{Labeling Function Families}
\label{sec:lffamily}
We define LF families as sets of expert-interpretable LFs described
by functions $z_\phi: \mathcal{X} \mapsto \{-1,0,1\}$, for parameters $\phi \in \Phi$.
An example are shallow decision trees $z_\phi$ parameterized by variables and splitting rules $\phi$~\cite{varma2018snuba}, or a function $z_\phi$ defining a regular expression for two words where  $\phi$  parameterizes the word choices from a vocabulary and the target label.
Given such an LF family, we can generate a large set of $p$ candidate heuristics $\mathcal{L} = \{\lambda_j(x)=z_{\phi_j}(x)\}_{j=1}^p,$ where $\phi_j \in \Phi$, e.g. by sampling from $\Phi$ and pruning low coverage candidates. \benediktx{These families often arise naturally in the form of LFs with repetitive structure that experts write from scratch, where template variables---such as keywords---can be sampled from the unlabeled data to create candidates.} For text, we can find n-grams within a document frequency range to generate key term lookups, fill placeholders in regular expressions, or generate shallow decision trees~\cite{ratner2016data,varma2018snuba,varma2019learning}. For time series, we can create a large set of LFs based on motifs~\cite{lonardi2002finding} or graphs of temporal constraints~\cite{guillame2017classification}. \benediktx{For images, we can create a library of pre-trained object detectors as in~\citet{chen2019slice}, or in some applications combine primitives of geometric properties of the images~\cite{varma2018snuba}.}

An LF family  has to be chosen with domain expert input. Compared to standard data programming, the burden of creating LFs from scratch is shifted to choosing an appropriate LF family and then judging recommended candidates. We argue that domain experts often have the foresight to choose an LF family such that a sufficiently sized subset of LFs is predictive of the latent class label. Such LF families may not exist for all data types and classification tasks. But when they exist they offer the opportunity to quickly build large, labeled datasets. Once created, it is reasonable to expect that the same LF generation procedure
can be reused for similar classification tasks without additional effort
(e.g. we use a single LF family procedure for all text datasets in our experiments).


\subsection{Interactive Weak Supervision}\label{sec:IWS}
Instead of having users provide $m$ good weak supervision sources up front, we want
to assist users in discovering them. 
Successful applications of data programming have established that
human experts are able to construct accurate LFs from scratch.
Our work leverages the assumption that human experts can also judge these properties 
when presented with pre-generated LFs of the same form.



Suppose again that we have an unlabeled dataset $X = \{x_i\}_{i=1}^n$, and 
that our goal is to train an \textbf{end classifier} $f$ without access to labels
$Y^* = \{y^*_i\}_{i=1}^n$.
Assume also that we defined a large pool of $p$ candidate LFs
$\mathcal{L} = \{\lambda_j(x)\}_{j=1}^p$ from an LF family (following
Sec.~\ref{sec:lffamily}), of varying accuracy and coverage. 
In IWS, our goal is to identify an optimal subset of LFs $\mathcal{L}^* \subset \mathcal{L}$
to pass to the \textbf{label model} in Eq.~(\ref{eq:labelmodel}).
Below, we will quantify how $\mathcal{L}^*$ depends on certain properties of LFs.
While we can observe some of these properties---such as coverage, agreement, and
conflicts---an important property that we cannot observe is the accuracy of each LF.

%

Our goal will thus be to infer quantities related to the latent accuracies
$\alpha_j \in [0, 1]$ of LFs $\lambda_j \in \mathcal{L}$, given a small
amount expert feedback.
To do this, we define an \textbf{expert-feedback model}, which can be used to
infer LF accuracies given a set of user feedback. To efficiently train this model, 
%
%
our IWS procedure sequentially chooses an LF $\lambda_j \in \mathcal{L}$ and shows a description of $\lambda_j$ to an expert, who provides binary feedback about $\lambda_j$. We follow ideas from active learning for sequential 
decision making under uncertainty, in which a probabilistic model guides data collection
to efficiently infer quantities of interest within $T$ iterations.
After a sequence of feedback iterations, we use the expert-feedback model to provide an
estimate $\hat{\mathcal{L}} \subset \mathcal{L}$ of the optimal subset $\mathcal{L}^*$.
The label model then uses $\hat{\mathcal{L}}$ to produce a probabilistic estimate of $Y^*$, which is used to train the end classifier $f$.
The full IWS procedure is illustrated in Fig.~\ref{fig:flowchart} and described in detail below. 
\vspace{-2mm}
\paragraph{Expert-Feedback Model}\label{sec:humanfeedback}
We first define a generative model of human expert feedback about LFs,
given the latent LF accuracies. This model will form the basis for an online
procedure that selects a sequence of LFs to show to human experts. 
We task experts to classify LFs as either useful or not useful $u_j \in \{0,1\}$,
corresponding to their \textit{belief that LF $\lambda_j$ is predictive of $Y^*$ at better than random accuracy for the samples where $\lambda_j$ does not abstain}. Note that prior data programming work~\cite{ratner2016data,ratner2019training,dunnmon2020cross,saab2020weak} assumes and demonstrates that experts are able to use their domain knowledge to make this judgment when creating LFs from scratch.  
%
%
We model the generative process for this feedback and the latent LF accuracies as, for $j = 1,\ldots, t$:
%
%
%
%
\begin{align}\label{eq:generativeprocess}
    u_j \sim \text{Bernoulli}(v_j), \hspace{2mm}
    v_j = h_\omega(\lambda_j), \hspace{2mm}
    \omega \sim \text{Prior}(\cdot) 
\end{align}
where $v_j$ can be viewed as the average probability that a human will label a given LF
$\lambda_j$ as $u_j=1$, and $h_\omega(\lambda_j)$ is a parameterized function
(such as a neural network), mapping each LF $\lambda_j$ to $v_j$.
Finally, to model the connection between accuracy $\alpha_j$ and $v_j$, 
we assume that $v_j = g(\alpha_j)$, where $g: [0,1] \rightarrow [0,1]$ is a 
monotonic increasing function mapping unknown LF accuracy $\alpha_j$ to $v_j$.




After $t$ queries of user feedback on LFs, we have produced a query dataset $Q_t = \{(\lambda_j, u_j)\}_{j=1}^t$.
Given $Q_t$, we infer unknown quantities in the above model, which are used to
choose the next LF $\lambda_j$ to query, by constructing an
acquisition function $\varphi_t: \mathcal{L} \rightarrow \mathbb{R}$ and optimizing
it over $\lambda \in \mathcal{L}$. 
%



\paragraph{Acquisition Strategy and Final Set of LFs}

To derive an online procedure for our user queries about LFs, we need to define
the properties of the ideal subset of generated LFs $\mathcal{L}^* \subset \mathcal{L}$
which we want to select. Prior data programming work of \citet{ratner2016data,ratner2019training,ratner2020snorkel}
with label models as in Eq.~(\ref{eq:labelmodel}) does not provide an explicit analysis of 
ideal metrics of LF sets and their trade-offs to help define this set.
We provide the following theorem, which will motivate our definition for $\mathcal{L}^*$.

\begin{restatable}[]{thm}{dawidskene}
\label{thrm:dawidskene}
Assume a binary classification setting, $m$ independent labeling functions with accuracy $\alpha_j \in [0,1]$
and labeling propensity $l_j \in [0,1]$. For a label model as in
Eq. (\ref{eq:labelmodel}) with given label model parameters
$\hat{\theta} \in \mathbb{R}^{2m}$, and for any $i \in \{1,\dots,n\}$,
\begin{equation*}
    P(\hat{y}_i = y_i^*) \geq
    1-\exp \left(
    -\frac{(\sum_{j=1}^m \hat{\theta}_j^{(1)}(2\alpha_j-1)l_j )^2}
          {2||\hat{\theta}^{(1)}||^2}
    \right)
\end{equation*}
where $\hat{\theta}^{(1)}$ are the $m$ weights of $\phi^{Acc}$,
and  $\hat{y}_i \in \{-1, 1\}$ is the label model estimate for $y_i^*$.
%
\end{restatable}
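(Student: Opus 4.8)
The plan is to show that the label model's prediction $\hat y_i$ reduces to a weighted majority vote over the LF outputs, and then to control its error with a Hoeffding bound whose mean is exactly the numerator appearing in the stated inequality.

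First I would use the conditional independence of the LFs given $Y$ to reduce the analysis to the single data point $i$, so that the relevant object is the posterior $p_{\hat\theta}(y_i \mid \Lambda_i) \propto \exp\big(\sum_j \hat\theta_j^{(1)} \mathbbm{1}\{\Lambda_{ij}=y_i\} + \sum_j \hat\theta_j^{(2)} \mathbbm{1}\{\Lambda_{ij}\neq 0\}\big)$. The key observation is that the propensity term does not depend on $y_i$ and therefore cancels in the posterior over $y_i \in \{-1,1\}$. Using $\mathbbm{1}\{\Lambda_{ij}=1\}-\mathbbm{1}\{\Lambda_{ij}=-1\}=\Lambda_{ij}$, the MAP estimate becomes
\[
\hat y_i = \operatorname{sign}\Big(\sum_{j=1}^m \hat\theta_j^{(1)} \Lambda_{ij}\Big),
\]
a weighted vote with weights $\hat\theta_j^{(1)}$.

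Next, by the class-symmetric accuracy assumption I would condition on $y_i^*=1$ without loss of generality. Define $W_j = \hat\theta_j^{(1)}\Lambda_{ij}$; these are independent across $j$ and bounded, with $W_j \in [-|\hat\theta_j^{(1)}|, |\hat\theta_j^{(1)}|]$. From $\alpha_j = P(\lambda_j(x)=y^*\mid \lambda_j(x)\neq 0)$ and $l_j = P(\lambda_j(x)\neq 0)$, the output $\Lambda_{ij}$ equals $+1$ with probability $l_j\alpha_j$, equals $-1$ with probability $l_j(1-\alpha_j)$, and equals $0$ otherwise, so $E[W_j] = \hat\theta_j^{(1)} l_j(2\alpha_j-1)$ and $\mu := E[\sum_j W_j] = \sum_j \hat\theta_j^{(1)}(2\alpha_j-1)l_j$, exactly the numerator in the bound. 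The correctness event is $\{\hat y_i = 1\} = \{\sum_j W_j > 0\}$, so the error event lies in $\{\sum_j W_j \le 0\} = \{\sum_j W_j - \mu \le -\mu\}$. Applying Hoeffding's inequality to the $W_j$, whose ranges have lengths $2|\hat\theta_j^{(1)}|$ with $\sum_j (2|\hat\theta_j^{(1)}|)^2 = 4\|\hat\theta^{(1)}\|^2$, gives $P(\sum_j W_j \le 0) \le \exp\big(-2\mu^2/(4\|\hat\theta^{(1)}\|^2)\big) = \exp\big(-\mu^2/(2\|\hat\theta^{(1)}\|^2)\big)$, and taking the complement yields the claim.

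The main obstacle is not the concentration step but the two edge cases surrounding it. The deviation form of Hoeffding requires $\mu > 0$, which I would justify as the regime of interest: the learned accuracy weights $\hat\theta_j^{(1)}$ are positive and useful LFs satisfy $\alpha_j > 1/2$, so each term $\hat\theta_j^{(1)}(2\alpha_j-1)l_j \ge 0$. I would also address ties ($\sum_j W_j = 0$) by noting that any tie-breaking rule only makes the error event a subset of $\{\sum_j W_j \le 0\}$, so the bound on $P(\text{error})$—and hence the lower bound on $P(\hat y_i = y_i^*)$—is unaffected. The most delicate bookkeeping is verifying the reduction to the signed vote and confirming that the symmetry reduction to $y_i^*=1$ leaves the error probability unchanged.
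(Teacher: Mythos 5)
Your proposal matches the paper's proof essentially step for step: the paper likewise reduces the MAP prediction to the sign of $\sum_{j=1}^m \hat\theta_j^{(1)}\lambda_j(x_i)$ after noting that the propensity factors cancel, computes the conditional means $\pm\sum_{j=1}^m \hat\theta_j^{(1)}(2\alpha_j-1)l_j$ under the two classes, and applies Hoeffding's inequality to the bounded independent terms $\xi_{ij}=\hat\theta_j^{(1)}\lambda_j(x_i)$, the only cosmetic difference being that the paper handles $y_i^*=1$ and $y_i^*=-1$ separately and combines them by the law of total probability where you invoke class symmetry. Your explicit caveat that the deviation step requires $\sum_{j=1}^m \hat\theta_j^{(1)}(2\alpha_j-1)l_j > 0$ (and your handling of ties) is a point the paper's own proof leaves implicit.
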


\begin{proof}
The proof is given in Appendix~\ref{sec:appendix02}.
\end{proof}
\vspace{-4mm}
This theorem indicates that 
one can rank LFs according to $(2\alpha_j-1)\hat{l}_j$ where $\alpha_j,\hat{l}_j$
are the unknown accuracy and observed coverage of LF $j$, respectively. We provide 
additional analysis in Appendix~\ref{sec:appendix02}.
Our analysis further suggests the importance of obtaining LFs with an accuracy gap
above chance. Intuitively, we do not want to add excessive noise by including LFs too
close to random. Below, we assume that our final set of LFs is sufficient to
accurately learn label model parameters $\hat{\theta}$, and leave analysis of
the influence of additional LF properties on learning $\hat{\theta}$ to
future work.

To define the ideal final subset of LFs, we distinguish three scenarios: (A) there are no restrictions on the size of the final set and any LF can be included, (B) the final set is limited in size (e.g. due to computational considerations) but any LF can be included, (C) only LFs inspected and validated by experts may be included, e.g. due to security or legal considerations. 

For each of these scenarios, at each step $t$ we maximize an acquisition function over the set of candidate LFs, i.e. compute
$\lambda_t = \argmax_{\lambda \in \mathcal{L} \setminus Q_{t-1}} \varphi_t(\lambda)$.
We then query a human expert to
obtain $(\lambda_t, u_t)$ and update the query dataset $Q_t = Q_{t-1} \cup \{(\lambda_t, u_t)\}$. After a sequence of $T$ queries we return an estimate of $\mathcal{L}^*$, denoted by $\hat{\mathcal{L}}$. The corresponding LF output matrix $\Lambda$ comprised of all $\lambda_j \in \hat{\mathcal{L}}$,
is then used to produce an estimate $\widehat{Y}$ of the true class labels via the label
model $P_{\theta}(Y| \Lambda  )$. Finally, a noise-aware discriminative end classifier $f$ is trained on $(X,\widehat{Y})$.

\textit{Scenario (A): Unbounded LF Set.} 
In the absence of restrictions on the final set of LFs, our analysis in Appendix~\ref{sec:appendix02}  indicates that the ideal subset of LFs $\mathcal{L}^*$ includes all those with accuracy greater than a gap above chance, i.e. $\alpha_j>r>0.5$. Thus, we define the optimal subset in this scenario as 
\begin{align}\label{eq:lsepotimalset}
    \mathcal{L}^* =
    \left\{ \lambda_j \in \mathcal{L}
    \hspace{1mm}:\hspace{1mm}
    \alpha_j > r
    \right\}.
\end{align}
This is a variation of the task of active level set estimation (LSE), where the goal is to identify all elements in a superlevel set of $\mathcal{L}$ \cite{zanette2018robust, gotovos2013active, bryan2006active}.
Thus, at each step $t$ we use the straddle acquisition function~\cite{bryan2006active} for LSE, defined for a candidate $\lambda_j \in \mathcal{L} \setminus Q_{t-1}$ to score LFs highest that are unknown and near the boundary threshold $r$:
\begin{align}\label{eq:lse_aq}
    \varphi_t^\text{LSE}(\lambda_j) =1.96 \hspace{1mm} \sigma_j(Q_{t-1}) - | \mu_j(Q_{t-1}) - r|
\end{align}
where $\sigma_j(Q_{t-1}) = \sqrt{\text{Var}[p(\alpha_j | Q_{t-1})]}$ is the
standard deviation and $\mu_j(Q_{t-1}) = \mathbb{E}[p(\alpha_j | Q_{t-1})]$ the mean
of the posterior LF accuracy. 
At the end of Sec.~\ref{sec:IWS}  we describe how we perform approximate inference of $p(\alpha_j | Q_{t-1})$ via an ensemble model. 
After a sequence of $T$ queries we return the following estimate of $\mathcal{L}^*$:
\begin{align}
    \hat{\mathcal{L}} = \left\{
    \lambda_j \in \mathcal{L} \hspace{1mm}:\hspace{1mm}
    \mu_j(Q_{T}) > r
    \right\}.
    \label{eq:final-set-a}
\end{align}
%
%
We denote the algorithm for scenario (A) by \textbf{IWS-LSE-a}. See Algorithm~\ref{alg:iws} for pseudocode describing this full IWS-LSE-a procedure.
In our experiments, we set $r=0.7$, though an ablation study shows that IWS-LSE works well for a range of thresholds $r > 0.5$  (Appendix~\ref{sec:ablation}, 
Fig.~\ref{fig:straddle_ablation_r}). Note that the LSE acquisition function aims to reduce uncertainty around $r$, and therefore tends to explore LFs that have coverage on parts of $Y$ that we are still uncertain about.

\textit{Scenario (B): Bounded LF Set.} If the final set is restricted in size to $m$ LFs,
e.g. due to computational considerations when learning the label model in
Eq. (\ref{eq:labelmodel}), we need to take the trade-off of LF accuracy and
LF coverage into account. Let $\hat{l}_j$ be the observed empirical coverage of LF
$\lambda_j$. We want to identify LFs with accuracy above $r$ and rank them according
to their accuracy-coverage trade-off, thus our analysis in the appendix suggests the optimal subset is
\begin{align}
    \mathcal{L}^* = 
    \argmax_{\mathcal{D} \subseteq \mathcal{L},|\mathcal{D}|=m} \sum_{\lambda_j \in \mathcal{D}} \left(\mathbbm{1}_{\{\alpha_j > r\}}  (2*\alpha_j-1)*\hat{l}_j \right)
    .
\end{align}
Since the LF accuracy-coverage trade-off only comes into effect if $\alpha_j>r$,
this yields the same acquisition function $\varphi_t^\text{LSE}$  in Eq.~(\ref{eq:lse_aq}),
and we then select the final set as
%
%
$\hat{\mathcal{L}} = \{
    \lambda_j \in \mathcal{D}
    \hspace{1mm}:\hspace{1mm}
    \argmax_{\mathcal{D} \subseteq \mathcal{L},|\mathcal{D}|=m} \sum_{\lambda_j \in \mathcal{D}} (\mathbbm{1}_{\{\mu_j(Q_{T}) > r\}}  (2*\mu_j(Q_{T})-1)*\hat{l}_j )
    \} $ 
which corresponds to a simple thresholding and sorting operation. We denote the algorithm for scenario (B) by \textbf{IWS-LSE-ac}.

\textit{Scenario (C): Validated LF Set.} Finally, in some application scenarios,
only LFs inspected and validated by experts should be used to estimate $Y^*$, e.g. due to
security or legal considerations. \benediktx{An LF $j$ is validated if it is shown to an expert who then responds with $u_j=1$}.
This leads to an active search problem~\cite{garnett2015introducing}
where our aim is to identify a maximum number of validated LFs (i.e.~$u=1$) in
$\mathcal{L}$ given a budget of $T$ user queries, i.e. to compute
\begin{align}\label{eq:final-set-as}
\mathcal{L}_\text{AS}^* = 
\argmax_{\mathcal{D} \subset \mathcal{L},|\mathcal{D}|=T} \sum_{\lambda_j \in \mathcal{D}} u_j,
\hspace{6mm}
\hat{\mathcal{L}} = \left\{
    \lambda_j \in Q_T \hspace{1mm}:\hspace{1mm}
    u_j= 1
    \right\}.
\end{align}
As in~\cite{garnett2015introducing,jiang2017efficient}, we use a one-step look ahead
active search acquisition function
defined for a candidate $\lambda_j \in \mathcal{L} \setminus Q_{t-1}$
to be the posterior probability that the usefulness label $u_j$ is positive, i.e.
$\varphi_t^\text{AS}( \lambda_j) = \mu_j(Q_{t-1})$.
We denote the algorithm for scenario (C)  by \textbf{IWS-AS}.
\begin{wrapfigure}{R}{0.52\textwidth}
    \vspace{-5.5mm}
    \centering
    \begin{minipage}{0.5\textwidth}
        \begin{algorithm}[H]
            \SetAlgoLined
            \SetKwInOut{Input}{Input}
            \Input{
            $\mathcal{L}$: set of LFs, $T$: max iterations.
            }
            $Q_0 \leftarrow \varnothing$\\
            \For{$t=1, 2, \ldots, T$}{
                $\lambda_t \leftarrow \argmax_{\lambda \in \mathcal{L} \setminus Q_{t-1}} \varphi_t( \lambda)$ \hfill $\triangleright$ Eq.~(\ref{eq:lse_aq})\\
            $u_t \leftarrow \textit{ExpertQuery}(\lambda_t)$ \\
            $Q_t \leftarrow Q_{t-1} \cup \{(\lambda_t, u_t)\}$
            }
            $\hat{\mathcal{L}}  \leftarrow \{\lambda_j \in \mathcal{L} \hspace{1mm}:\hspace{1mm}
            \mathbb{E}[p(\alpha_j|Q_T)] > r \}$ \hfill $\triangleright$ Eq.~(\ref{eq:final-set-a})\\
            \caption{\textbf{Interactive Weak Supervision (IWS-LSE-a). 
            }}
            \label{alg:iws}
        \end{algorithm}
        \vspace{-2mm}
    \end{minipage}
\end{wrapfigure}
\paragraph{Approximate Inference Details}
We now describe how we use our expert-feedback model in Eq.~(\ref{eq:generativeprocess}) to infer $p(\alpha_j | Q_t)$, a quantity used in the acquisition functions and  final set estimates. Recall that we defined a generative model of human feedback $u_j$ on query LF $\lambda_j$ with latent variables $v_j$ and $\omega$. We assumed a connection between $v_j$ and the latent LF accuracy $\alpha_j$ via a monotonic increasing function $\alpha_j = g(v_j)$.
Similar to existing work on high dimensional uncertainty estimation \cite{beluch2018power, chitta2018large}, we use an 
ensemble $\{\tilde{h}_{\omega^{(i)}}\}_{i=1}^s$ of $s$ neural networks $\tilde{h}_\omega$ with parameters $\omega$ to predict $u_j$ given input $\lambda_j$. To perform this prediction,
we need a feature representation $\tau(\lambda_j)$ for LFs
that is general and works for any data type and task.
To create these features, we use the LF output over our unlabeled dataset $\tau_0(\lambda_j) = (\lambda_j(x_1), \ldots, \lambda_j(x_n))$. We then project $\tau_0(\lambda_j)$ 
to $d'$ dimensions using PCA for a final feature representation
$\tau(\lambda_j)$,  which is given as input to each $\tilde{h}_\omega$.
%
Our neural network ensemble can now learn functions $\tilde{h}: \mathbb{R}^{d'} \rightarrow [0,1]$, which map from LF features $\tau(\lambda_j)$ to $v_j=p(u_j=1|Q_t)$.
This yields an ensemble of estimates for $v_j$, and through $g^{-1}$, of $\alpha_j$.
These are treated as approximate samples from $p(\alpha_j | Q_t)$, and used to form
sample-estimates used in the acquisition functions.

\vspace{-2mm}
\section{Experiments}
\label{sec:experiments}
\vspace{-3mm}
\begin{figure}
    \centering
    \includegraphics[width=1.\textwidth]{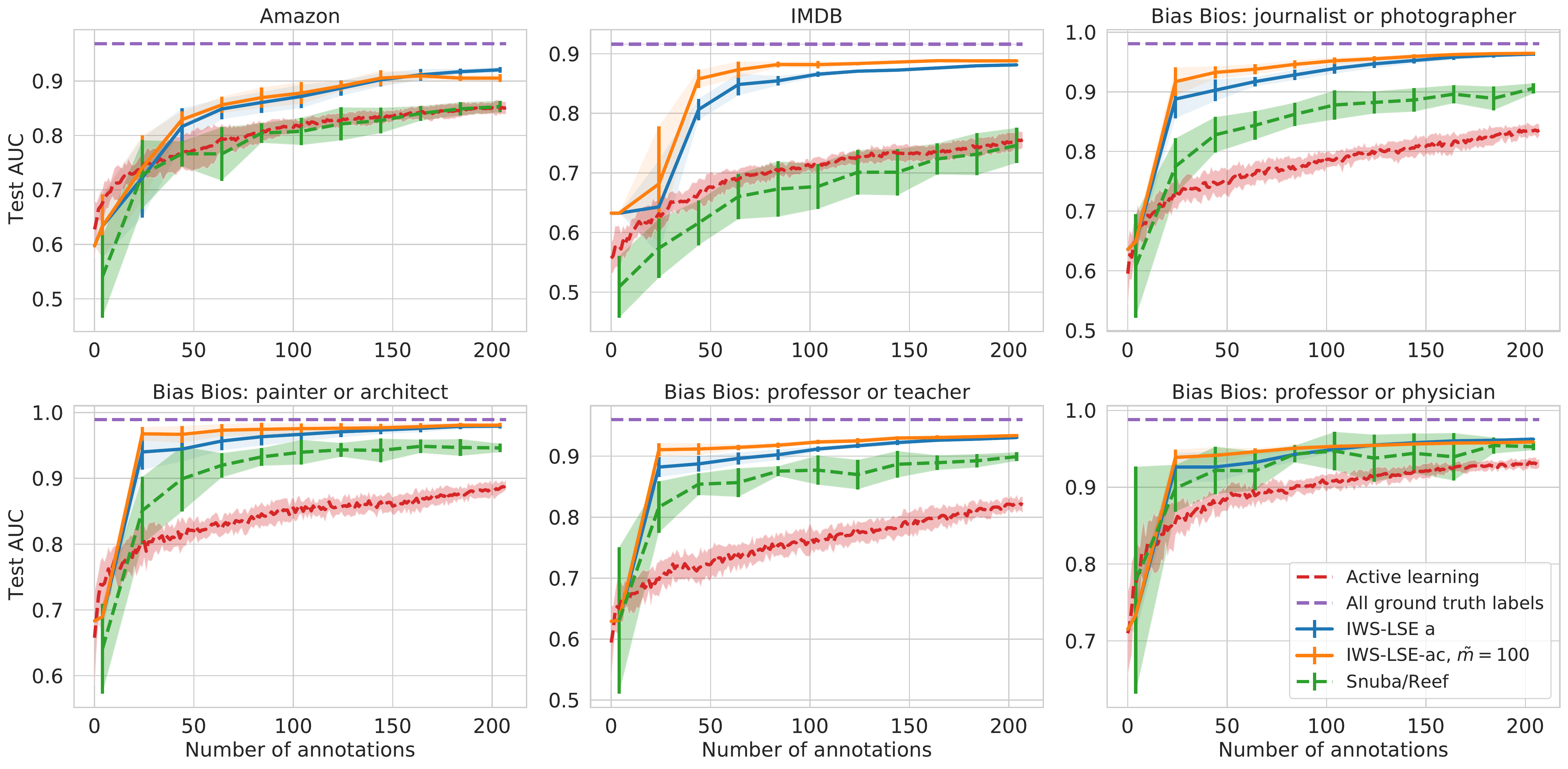}
    \vspace{-5mm}
    \caption{Test set AUC of end classifiers vs.\ number of iterations. IWS-LSE is compared to active learning, Snuba, and to using all training ground truth labels. Note that one iteration in this plot corresponds to one expert label. A comparison of true user effort needed to answer each type of query (label for one sample vs.\ label for one LF) will vary by application. 
    }
    \label{fig:oracle_exp_amazonimdb_sub}
    \vspace{-4mm}
\end{figure}
Our experiments show that heuristics obtained via a small number of iterations of IWS can be used to train a downstream end classifier $f$ with highly competitive test set performance. 
We first present results obtained with a simulated IWS oracle instead of human users. Oracle experiments allow us to answer how our method would perform if users had perfect knowledge about LF accuracies.
We then show results from a user study on text data in which the query feedback is given by humans. In Appendix~\ref{sec:user_imageLFs} we provide results of a user study on images, using image based LFs. A comprehensive description of the datasets and implementation details can be found in Appendix~\ref{sec:appendix01}.
\vspace{-1mm}
\paragraph{Datasets}
\textit{Text Classification: } We create six binary text classification tasks on the basis of three public datasets: Amazon reviews~\cite{he2016ups}, IMDB reviews~\cite{maas2011learning}, and BiasBios biographies~\cite{de2019bias}.  The tasks are chosen such that most English speakers can provide sensible expert feedback on LFs,  for ease of reproducibility. 
\benediktx{
\\
\textit{Cross-modal classification:}\hspace{2mm}
As in \citet{varma2018snuba}, we take the COCO dataset \cite{lin2014microsoft} and generate LFs over captions, while classification is performed on the associated images. The two binary tasks are to identify a `person' in an image, and to identify `sports' in an image.
\\
\textit{Image classification:}\hspace{2mm}
For image classification tasks with image LFs, we use the COCO dataset and create two binary classification tasks to identify `sports' in an image and `vehicle' in an image. For these image-only experiments, we generate nearest-neighbor based LFs directly on the images. 
}
\vspace{-2mm}
\paragraph{Approaches}
All approaches train the same downstream end classifier $f$ on the same inputs $X$. We show results for \textit{IWS-LSE-a} (unbounded LF set), \textit{IWS-LSE-ac} (bounded LF set), and \textit{IWS-AS} (validated LF set). \benediktx{
For \textit{IWS-LSE-ac}, we bound the size of the final set of LFs at each iteration $t$ by $m=\sum_{i=1}^{t-1}u_i+\tilde{m}$, i.e. the number of LFs so far annotated as $u=1$ plus a constant $\tilde{m}$.
}. 
We compare the test set performance of IWS to a set of alternatives 
including (1) annotation of samples via \textit{active learning} (uncertainty sampling) by a noiseless oracle, (2) the \textit{Snuba} system~\cite{varma2018snuba}, and (3) using \textit{all ground truth training labels}. In Appendix~\ref{sec:appbbaselines} we provide additional results comparing IWS to itself using a random acquisition function (\textit{IWS-random}). In our figures, annotations on the x-axis correspond to labeled samples for Snuba and active learning, and to labeled LFs for IWS. We note that this head to head comparison of user effort is application dependent. We  provide a timing study to showcase the effort required to carry out IWS versus labeling of samples in our specific user experiments on text in Table~\ref{table:1}. 

\vspace{-2mm}
\paragraph{LF Families} For text tasks, prior work such as
\citet{ratner2020snorkel} and \citet{varma2019learning} demonstrates that word and phrase LFs can provide good weak supervision sources.
To generate LFs, we define a uni-gram vocabulary over all documents and discard high and low frequency terms. We then exhaustively generate LFs from an LF family $z_\phi$ which outputs a target label if a uni-gram appears in a document, where $\phi$ specifies the uni-gram and target label. We also evaluated combinations of higher-order n-grams, but did not observe a significant change in performance. 
For COCO images, it is difficult to obtain strong domain primitives to create weak supervision sources, even for data programming from scratch. We hence choose the images and their embeddings themselves to do this job for us, by relying on k-nearest neighbor functions. To generate LFs with high coverage, we first create small, unique clusters of up to $k_1$ mutual nearest neighbors (MkNN)\footnote{Image A is a $k_1$ nearest neighbor of image B, and image B is also a $k_1$ nearest neighbor of image A.}. For each member of a cluster, we then find the $k_2$ nearest neighbors, and keep ones shared by at least one other cluster member. Finally, each extended cluster defines an LF, which assigns the same label to each member of the extended cluster.
The MkNN symmetry produces good initial clusters of varying size, while the second kNN step produces LFs with large and varying coverage. User experiments in Appendix \ref{sec:user_imageLFs} show that real users can judge the latent LF usefulness quickly by visually inspecting the consistency of the initial cluster and a small selection of the cluster nearest neighbors.


\vspace{-2mm}
\subsection{Oracle experiments}\label{sec:experiments_oracle}
\vspace{-1mm}
The simulated oracle labels an LF as useful if it has an accuracy of at least $0.7$.  
Measured by test-set AUC of final classifier $f$, IWS-LSE outperforms other approaches significantly on five out of six text datasets, and matches the best performance also attained by Snuba on one dataset (Fig.~\ref{fig:oracle_exp_amazonimdb_sub}). IWS-AS (Fig.~\ref{fig:oracle_exp_amazonimdb_all}, Appendix) performs similarly well on four text datasets, and competitively on the other two. Both IWS approaches outperform active learning by a wide margin on all text datasets. IWS also quickly approaches the performance achieved by an end model trained on the full ground truth training labels. We provide ablation results for IWS-LSE varying the final set size as well as thresholds $r$ in Appendix~\ref{sec:appbbaselines}. For the COCO image tasks, LFs were created using image captions as in~\cite{varma2018snuba} (Fig.~\ref{fig:coco}, first and second plot), as well as on images directly via nearest neighbors (Fig.~\ref{fig:coco}, third and fourth plot). IWS also performs competitively on these image tasks and quickly approaches the performance achieved using all training ground truth.


\vspace{-2mm}
\subsection{User experiments on text}\label{sec:experiments_users}
\vspace{-1mm}
\begin{figure}
    \centering
    \includegraphics[width=0.45\textwidth]{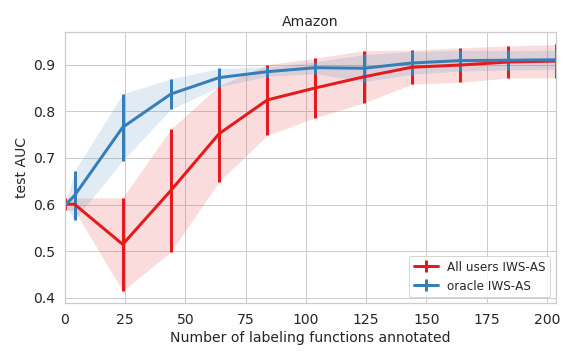}
    \includegraphics[width=0.45\textwidth]{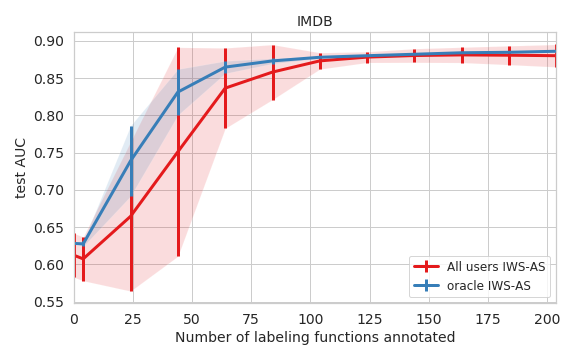}\\
    \includegraphics[width=0.31\textwidth]{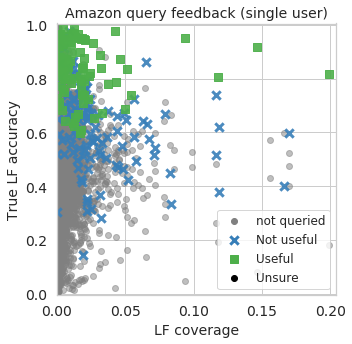}
    \includegraphics[width=0.31\textwidth]{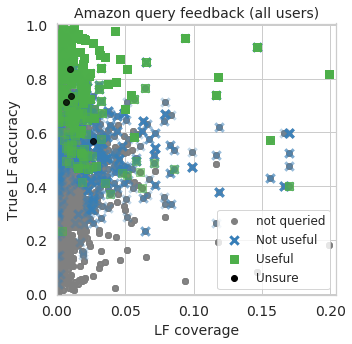}
    \includegraphics[width=0.31\textwidth]{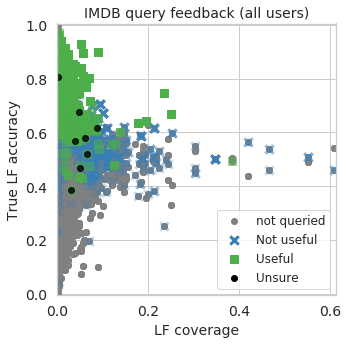}
    \vspace{-3mm}
    \caption{\textbf{Human user study, text data.} \textit{Top:} Test AUC of end classifiers trained on soft labels obtained via IWS-AS. Test set performance of humans closely tracks performance using a simulated oracle after $\sim$100 iterations. \textit{Bottom:} scatter plots of human responses to queries showing the true  LF accuracy vs LF coverage  by one user (lower left) and all users (lower middle and lower right). An `unsure' response does not provide a label to an LF query but is counted as an annotation.}
    \label{fig:user_oracle}
    \vspace{-4mm}
\end{figure}
\begin{wraptable}{h}{6.9cm}
\vspace{-8mm}
\caption{Median (mean) user response time.}
\label{table:1}
\vspace{1mm}
\begin{tabular}{l|c|c}  
Dataset & Annotate LF & Annotate sample \\\toprule
Amazon & 4.2s (8.3s) & 7.9s (10.3s)\\  \midrule
IMDB & 3.2s (6.0s) &  19.s (24.3s)\\ \bottomrule
\end{tabular}
\vspace{-2mm}
\end{wraptable}
We conduct experiments of IWS-AS with real users on the Amazon and IMDB review sentiment classification tasks. The results demonstrate that users judge high accuracy functions as useful and make few mistakes. In our experiments, 
users are shown a description  of the heuristic (the key term pattern) and the intended label. 
Users can also view four snippets of random documents where the LF applied, but are instructed to only consider the examples if necessary. See Appendix \ref{sec:interface} for a screenshot of the query interface and details regarding the user prompts. 
The top of Fig.~\ref{fig:user_oracle} shows that mean test set performance of IWS-AS using LFs obtained from human feedback closely tracks the simulated oracle performance after about 100 iterations.
Fig.~\ref{fig:user_oracle} further shows the queried LFs and corresponding user responses by their true accuracy vs. their non-abstain votes.
To match the mean test AUC of IWS-AS obtained after 200 iterations on the Amazon dataset, active learning (uncertainty sampling) requires about 600 iterations. For the IMDB dataset, to achieve the same mean test AUC of IWS-AS obtained after 200 iterations, active learning requires more than 1000 iterations. 
For both datasets, the average response time to each query was fast. We conducted a manual labeling exercise of samples for the IMDB and Amazon datasets (Table~\ref{table:1}) with real users. Assuming the original ratings are true, our users incorrectly classified $\sim$9\% of IMDB reviews while taking significantly longer compared to the response times to LF queries. For the Amazon dataset, users mislabeled $\sim$2\% of samples and were also slower at labeling samples than LFs. The user-study experiments involved nine persons with a computer science background. Neither the true accuracy of each heuristic nor the end model train or test set results were revealed to the users at any stage of the experiment. \benediktx{Appendix~\ref{sec:user_imageLFs} provides results for a similar user study on the COCO sports task with image LFs. These results are consistent with those for text, showing that users are able to distinguish accurate vs. inaccurate image LFs well, and that the full IWS procedure with real users achieves similar performance as the one using a simulated oracle.}


\begin{figure}
    \centering
    \includegraphics[width=1.0\textwidth]{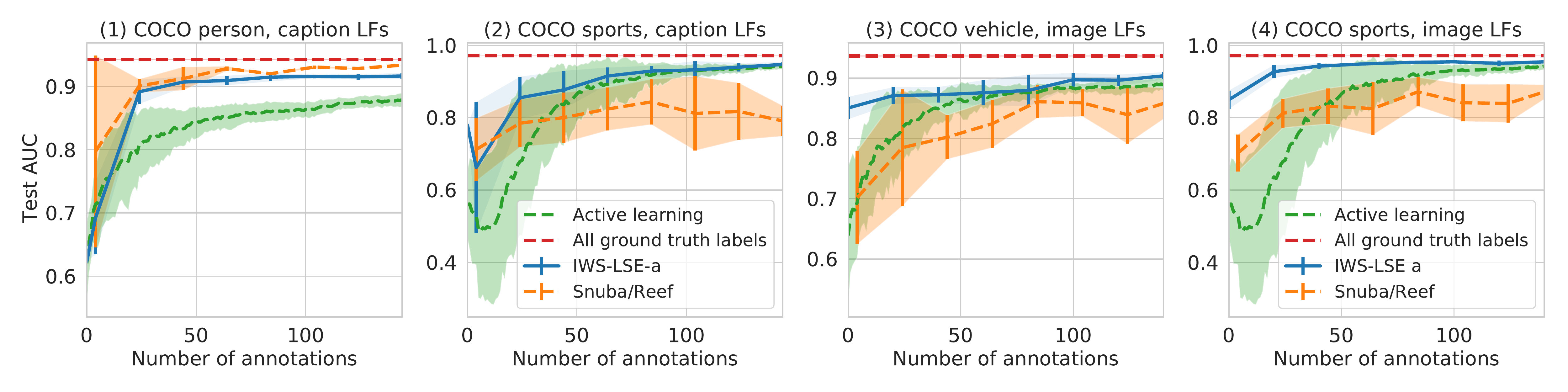}
    \vspace{-8mm}
    \caption{COCO image classification. \textit{Images (1) and (2):} Test AUC of image classifiers trained using probabilistic labels obtained from LFs on captions, compared to training with active learning and the full training ground truth. \textit{Images (3) and (4):} Test AUC of image classifiers trained using nearest neighbor based image LFs compared to training with active learning and the full training ground truth. Due to the low coverage of LFs, we only use IWS-LSE-a in our image experiments.}
    \label{fig:coco}
    \vspace{-3mm}
\end{figure}

\vspace{-3mm}
\section{Conclusion}
\label{sec:conclusion}
\vspace{-3mm}
We have proposed methods for interactively discovering weak supervision sources. Our results show that a small number of expert interactions can suffice to select good weak supervision sources from a large pool of candidates,
leading to competitive end classifiers.
The proposed methodology shows promise as a way to
significantly speed up the process of weak supervision source discovery by domain experts as an alternative to devising such sources from scratch. 
On a large number of tasks, we obtain superior predictive performance on downstream test sets compared to the automatic selection of LFs with Snuba~\cite{varma2018snuba}  and standard active learning (where users annotate samples instead of LFs),
when measured with respect to the number of user annotations. We conduct experiments with real users on two text benchmark datasets and one image dataset and find that humans recognize and approve high accuracy LFs, yielding models that match performance attainable with a simulated oracle. Our text experiments also suggest that tasks exist where users are able to annotate heuristics faster than individual samples. 



There are limitations to the approaches we propose in their current form. While our experiments on text and image data show promise, future work is required to investigate appropriate LF families for a wider variety of tasks and data types. Furthermore, we rely on a domain expert's ability to judge the quality of LFs generated from an LF family. While it is true that experts similarly have to judge the quality of LFs they create from scratch, the level of interpretability required of an LF family in IWS may be difficult to achieve for some tasks and data types. 
Additionally, datasets with a large number of classes may lead to very large sets of candidate LFs. Thus, future work will need to find efficient ways to search this space. Future work should also explore hybrid methods of IWS and active learning, interactive learning of LF dependencies, and acquisition functions which optimize for additional properties of sets of heuristics such as diversity.

\subsection*{Acknowledgments}
We thank Kyle Miller at the Auton Lab for the valuable feedback and discussions. We also want to express our thanks to Maria De Arteaga Gonzalez, Vanessa Kolb, Ian Char, Youngseog Chung, Viraj Mehta, Gus Welter, and Nicholas Gisolfi for their help with this project. 
This work was partially supported by a Space Technology Research Institutes grant from NASA’s Space Technology Research Grants Program, and by Defense Advanced Research Projects Agency's award FA8750-17-2-0130.
WN was supported by U.S. Department of Energy Office of Science under Contract No. DE-AC02-76SF00515.

\bibliography{main}

\begin{thebibliography}{51}
\providecommand{\natexlab}[1]{#1}
\providecommand{\url}[1]{\texttt{#1}}
\expandafter\ifx\csname urlstyle\endcsname\relax
  \providecommand{\doi}[1]{doi: #1}\else
  \providecommand{\doi}{doi: \begingroup \urlstyle{rm}\Url}\fi

\bibitem[Bach et~al.(2017)Bach, He, Ratner, and R{\'e}]{bach2017learning}
Stephen~H Bach, Bryan He, Alexander Ratner, and Christopher R{\'e}.
\newblock Learning the structure of generative models without labeled data.
\newblock In \emph{Proceedings of the 34th International Conference on Machine
  Learning - Volume 70}, pp.\  273--282, 2017.

\bibitem[Bach et~al.(2019)Bach, Rodriguez, Liu, Luo, Shao, Xia, Sen, Ratner,
  Hancock, Alborzi, et~al.]{bach2019snorkel}
Stephen~H Bach, Daniel Rodriguez, Yintao Liu, Chong Luo, Haidong Shao,
  Cassandra Xia, Souvik Sen, Alex Ratner, Braden Hancock, Houman Alborzi,
  et~al.
\newblock Snorkel drybell: A case study in deploying weak supervision at
  industrial scale.
\newblock In \emph{Proceedings of the 2019 International Conference on
  Management of Data}, pp.\  362--375, 2019.

\bibitem[Beluch et~al.(2018)Beluch, Genewein, N{\"u}rnberger, and
  K{\"o}hler]{beluch2018power}
William~H Beluch, Tim Genewein, Andreas N{\"u}rnberger, and Jan~M K{\"o}hler.
\newblock The power of ensembles for active learning in image classification.
\newblock In \emph{Proceedings of the IEEE Conference on Computer Vision and
  Pattern Recognition}, pp.\  9368--9377, 2018.

\bibitem[Boecking \& Dubrawski(2019)Boecking and
  Dubrawski]{boecking2019pairwise}
Benedikt Boecking and Artur Dubrawski.
\newblock Pairwise feedback for data programming.
\newblock \emph{NeurIPS Workshop on Learning with Rich Experience (LIRE)},
  2019.

\bibitem[Bryan et~al.(2006)Bryan, Nichol, Genovese, Schneider, Miller, and
  Wasserman]{bryan2006active}
Brent Bryan, Robert~C Nichol, Christopher~R Genovese, Jeff Schneider,
  Christopher~J Miller, and Larry Wasserman.
\newblock Active learning for identifying function threshold boundaries.
\newblock In \emph{Advances in Neural Information Processing Systems}, pp.\
  163--170, 2006.

\bibitem[Chatterjee et~al.(2020)Chatterjee, Ramakrishnan, and
  Sarawagi]{chatterjee2020robust}
Oishik Chatterjee, Ganesh Ramakrishnan, and Sunita Sarawagi.
\newblock Robust data programming with precision-guided labeling functions.
\newblock In \emph{Proceedings of the AAAI Conference on Artificial
  Intelligence}, volume~34, pp.\  3397--3404, 2020.

\bibitem[Chen et~al.(2020)Chen, Fu, Sala, Wu, Mullapudi, Poms, Fatahalian, and
  R{\'e}]{chen2020train}
Mayee~F Chen, Daniel~Y Fu, Frederic Sala, Sen Wu, Ravi~Teja Mullapudi, Fait
  Poms, Kayvon Fatahalian, and Christopher R{\'e}.
\newblock Train and you'll miss it: Interactive model iteration with weak
  supervision and pre-trained embeddings.
\newblock \emph{arXiv preprint arXiv:2006.15168}, 2020.

\bibitem[Chen et~al.(2019)Chen, Wu, Ratner, Weng, and R{\'e}]{chen2019slice}
Vincent Chen, Sen Wu, Alexander~J Ratner, Jen Weng, and Christopher R{\'e}.
\newblock Slice-based learning: A programming model for residual learning in
  critical data slices.
\newblock In \emph{Advances in Neural Information Processing Systems}, pp.\
  9392--9402, 2019.

\bibitem[Chitta et~al.(2018)Chitta, Alvarez, and Lesnikowski]{chitta2018large}
Kashyap Chitta, Jose~M Alvarez, and Adam Lesnikowski.
\newblock Large-scale visual active learning with deep probabilistic ensembles.
\newblock \emph{arXiv preprint arXiv:1811.03575}, 2018.

\bibitem[Cohen-Wang et~al.(2019)Cohen-Wang, Mussmann, Ratner, and
  R{\'e}]{cohen2019interactive}
Benjamin Cohen-Wang, Stephen Mussmann, Alex Ratner, and Chris R{\'e}.
\newblock Interactive programmatic labeling for weak supervision.
\newblock \emph{KDD Data Collection, Curation, and Labeling for Mining and
  Learning Workshop}, 2019.

\bibitem[Dalvi et~al.(2013)Dalvi, Dasgupta, Kumar, and
  Rastogi]{dalvi2013aggregating}
Nilesh Dalvi, Anirban Dasgupta, Ravi Kumar, and Vibhor Rastogi.
\newblock Aggregating crowdsourced binary ratings.
\newblock In \emph{Proceedings of the 22nd International Conference on World
  Wide Web}, pp.\  285--294, 2013.

\bibitem[Dawid \& Skene(1979)Dawid and Skene]{dawid1979maximum}
Alexander~Philip Dawid and Allan~M Skene.
\newblock Maximum likelihood estimation of observer error-rates using the em
  algorithm.
\newblock \emph{Journal of the Royal Statistical Society: Series C (Applied
  Statistics)}, 28\penalty0 (1):\penalty0 20--28, 1979.

\bibitem[De-Arteaga et~al.(2019)De-Arteaga, Romanov, Wallach, Chayes, Borgs,
  Chouldechova, Geyik, Kenthapadi, and Kalai]{de2019bias}
Maria De-Arteaga, Alexey Romanov, Hanna Wallach, Jennifer Chayes, Christian
  Borgs, Alexandra Chouldechova, Sahin Geyik, Krishnaram Kenthapadi, and
  Adam~Tauman Kalai.
\newblock Bias in bios: A case study of semantic representation bias in a
  high-stakes setting.
\newblock In \emph{Proceedings of the Conference on Fairness, Accountability,
  and Transparency}, pp.\  120--128, 2019.

\bibitem[Dehghani et~al.(2017)Dehghani, Zamani, Severyn, Kamps, and
  Croft]{dehghani2017neural}
Mostafa Dehghani, Hamed Zamani, Aliaksei Severyn, Jaap Kamps, and W.~Bruce
  Croft.
\newblock Neural ranking models with weak supervision.
\newblock In \emph{Proceedings of the 40th International ACM SIGIR Conference
  on Research and Development in Information Retrieval}, pp.\  65–74.
  Association for Computing Machinery, 2017.

\bibitem[Dehghani et~al.(2018)Dehghani, Mehrjou, Gouws, Kamps, and
  Schölkopf]{dehghani2018fidelityweighted}
Mostafa Dehghani, Arash Mehrjou, Stephan Gouws, Jaap Kamps, and Bernhard
  Schölkopf.
\newblock Fidelity-weighted learning.
\newblock In \emph{Proceedings of the International Conference on Learning
  Representations}, 2018.

\bibitem[Druck et~al.(2009)Druck, Settles, and McCallum]{druck2009active}
Gregory Druck, Burr Settles, and Andrew McCallum.
\newblock Active learning by labeling features.
\newblock In \emph{Proceedings of the 2009 Conference on Empirical Methods in
  Natural Language Processing}, pp.\  81--90, 2009.

\bibitem[Dunnmon et~al.(2020)Dunnmon, Ratner, Saab, Khandwala, Markert,
  Sagreiya, Goldman, Lee-Messer, Lungren, Rubin, and Re]{dunnmon2020cross}
Jared~A. Dunnmon, Alexander~J. Ratner, Khaled Saab, Nishith Khandwala, Matthew
  Markert, Hersh Sagreiya, Roger Goldman, Christopher Lee-Messer, Matthew~P.
  Lungren, Daniel~L. Rubin, and Christopher Re.
\newblock Cross-modal data programming enables rapid medical machine learning.
\newblock \emph{Patterns}, 1\penalty0 (2):\penalty0 100019, 2020.

\bibitem[Fries et~al.(2019)Fries, Varma, Chen, Xiao, Tejeda, Saha, Dunnmon,
  Chubb, Maskatia, Fiterau, et~al.]{fries2019weakly}
Jason~A Fries, Paroma Varma, Vincent~S Chen, Ke~Xiao, Heliodoro Tejeda,
  Priyanka Saha, Jared Dunnmon, Henry Chubb, Shiraz Maskatia, Madalina Fiterau,
  et~al.
\newblock Weakly supervised classification of aortic valve malformations using
  unlabeled cardiac mri sequences.
\newblock \emph{Nature Communications}, 10\penalty0 (1):\penalty0 1--10, 2019.

\bibitem[Garnett et~al.(2015)Garnett, G{\"a}rtner, Vogt, and
  Bajorath]{garnett2015introducing}
Roman Garnett, Thomas G{\"a}rtner, Martin Vogt, and J{\"u}rgen Bajorath.
\newblock Introducing the ‘active search’method for iterative virtual
  screening.
\newblock \emph{Journal of Computer-Aided Molecular Design}, 29\penalty0
  (4):\penalty0 305--314, 2015.

\bibitem[Gotovos(2013)]{gotovos2013active}
Alkis Gotovos.
\newblock Active learning for level set estimation.
\newblock Master's thesis, Eidgen{\"o}ssische Technische Hochschule Z{\"u}rich,
  Department of Computer Science,, 2013.

\bibitem[Guillame-Bert \& Dubrawski(2017)Guillame-Bert and
  Dubrawski]{guillame2017classification}
Mathieu Guillame-Bert and Artur Dubrawski.
\newblock Classification of time sequences using graphs of temporal
  constraints.
\newblock \emph{The Journal of Machine Learning Research}, 18\penalty0
  (1):\penalty0 4370--4403, 2017.

\bibitem[Halpern et~al.(2014)Halpern, Choi, Horng, and
  Sontag]{halpern2014using}
Yoni Halpern, Youngduck Choi, Steven Horng, and David Sontag.
\newblock Using anchors to estimate clinical state without labeled data.
\newblock In \emph{AMIA Annual Symposium Proceedings}, volume 2014, pp.\  606.
  American Medical Informatics Association, 2014.

\bibitem[Hancock et~al.(2018)Hancock, Bringmann, Varma, Liang, Wang, and
  R{\'e}]{hancock2018training}
Braden Hancock, Martin Bringmann, Paroma Varma, Percy Liang, Stephanie Wang,
  and Christopher R{\'e}.
\newblock Training classifiers with natural language explanations.
\newblock In \emph{ACL}, volume 2018, pp.\  1884. NIH Public Access, 2018.

\bibitem[He et~al.(2016)He, Zhang, Ren, and Sun]{he2016deep}
Kaiming He, Xiangyu Zhang, Shaoqing Ren, and Jian Sun.
\newblock Deep residual learning for image recognition.
\newblock In \emph{Proceedings of the IEEE Conference on Computer Vision and
  Pattern Recognition}, pp.\  770--778, 2016.

\bibitem[He \& McAuley(2016)He and McAuley]{he2016ups}
Ruining He and Julian McAuley.
\newblock Ups and downs: Modeling the visual evolution of fashion trends with
  one-class collaborative filtering.
\newblock In \emph{Proceedings of the 25th International Conference on World
  Wide Web}, pp.\  507--517. International World Wide Web Conferences Steering
  Committee, 2016.

\bibitem[Hoffmann et~al.(2011)Hoffmann, Zhang, Ling, Zettlemoyer, and
  Weld]{hoffmann2011knowledge}
Raphael Hoffmann, Congle Zhang, Xiao Ling, Luke Zettlemoyer, and Daniel~S Weld.
\newblock Knowledge-based weak supervision for information extraction of
  overlapping relations.
\newblock In \emph{Proceedings of the 49th Annual Meeting of the Association
  for Computational Linguistics: Human Language Technologies-Volume 1}, pp.\
  541--550. Association for Computational Linguistics, 2011.

\bibitem[Jiang et~al.(2017)Jiang, Malkomes, Converse, Shofner, Moseley, and
  Garnett]{jiang2017efficient}
Shali Jiang, Gustavo Malkomes, Geoff Converse, Alyssa Shofner, Benjamin
  Moseley, and Roman Garnett.
\newblock Efficient nonmyopic active search.
\newblock In \emph{Proceedings of the 34th International Conference on Machine
  Learning-Volume 70}, pp.\  1714--1723. JMLR. org, 2017.

\bibitem[Karger et~al.(2011)Karger, Oh, and Shah]{karger2011iterative}
David~R Karger, Sewoong Oh, and Devavrat Shah.
\newblock Iterative learning for reliable crowdsourcing systems.
\newblock In \emph{Advances in Neural Information Processing Systems}, pp.\
  1953--1961, 2011.

\bibitem[Kingma \& Ba(2014)Kingma and Ba]{kingma2014adam}
Diederik~P Kingma and Jimmy Ba.
\newblock Adam: A method for stochastic optimization.
\newblock \emph{arXiv preprint arXiv:1412.6980}, 2014.

\bibitem[Lang \& Poon(2021)Lang and Poon]{lang2021self}
Hunter Lang and Hoifung Poon.
\newblock Self-supervised self-supervision by combining deep learning and
  probabilistic logic.
\newblock In \emph{In {Proceedings} of the {Thirty} {Fifth} {Annual} {Meeting}
  of the {Association} for the {Advancement} of {Artificial} {Intelligence}
  ({AAAI})}, 2021.

\bibitem[Li et~al.(2013)Li, Yu, and Zhou]{li2013error}
Hongwei Li, Bin Yu, and Dengyong Zhou.
\newblock Error rate analysis of labeling by crowdsourcing.
\newblock In \emph{ICML Workshop: Machine Learning Meets Crowdsourcing.
  Atalanta, Georgia, USA}. Citeseer, 2013.

\bibitem[Lin et~al.(2014)Lin, Maire, Belongie, Hays, Perona, Ramanan,
  Doll{\'a}r, and Zitnick]{lin2014microsoft}
Tsung-Yi Lin, Michael Maire, Serge Belongie, James Hays, Pietro Perona, Deva
  Ramanan, Piotr Doll{\'a}r, and C~Lawrence Zitnick.
\newblock Microsoft coco: Common objects in context.
\newblock In \emph{European Conference on Computer Vision}, pp.\  740--755.
  Springer, 2014.

\bibitem[Lonardi \& Patel(2002)Lonardi and Patel]{lonardi2002finding}
JLEKS Lonardi and Pranav Patel.
\newblock Finding motifs in time series.
\newblock In \emph{Proc. of the 2nd Workshop on Temporal Data Mining}, pp.\
  53--68, 2002.

\bibitem[Maas et~al.(2011)Maas, Daly, Pham, Huang, Ng, and
  Potts]{maas2011learning}
Andrew~L. Maas, Raymond~E. Daly, Peter~T. Pham, Dan Huang, Andrew~Y. Ng, and
  Christopher Potts.
\newblock Learning word vectors for sentiment analysis.
\newblock In \emph{Proceedings of the 49th Annual Meeting of the Association
  for Computational Linguistics: Human Language Technologies}, pp.\  142--150,
  June 2011.

\bibitem[Nashaat et~al.(2018)Nashaat, Ghosh, Miller, Quader, Marston, and
  Puget]{nashaat2018hybridization}
Mona Nashaat, Aindrila Ghosh, James Miller, Shaikh Quader, Chad Marston, and
  Jean-Francois Puget.
\newblock Hybridization of active learning and data programming for labeling
  large industrial datasets.
\newblock In \emph{2018 IEEE International Conference on Big Data (Big Data)},
  pp.\  46--55. IEEE, 2018.

\bibitem[Ramos et~al.(2020)Ramos, Meek, Simard, Suh, and Ghorashi]{ramos2020aw}
Gonzalo Ramos, Christopher Meek, Patrice Simard, Jina Suh, and Soroush
  Ghorashi.
\newblock Interactive machine teaching: a human-centered approach to building
  machine-learned models.
\newblock \emph{Human--Computer Interaction}, 35\penalty0 (5-6):\penalty0
  413--451, November 2020.

\bibitem[Ratner et~al.(2019)Ratner, Hancock, Dunnmon, Sala, Pandey, and
  R{\'e}]{ratner2019training}
Alexander Ratner, Braden Hancock, Jared Dunnmon, Frederic Sala, Shreyash
  Pandey, and Christopher R{\'e}.
\newblock Training complex models with multi-task weak supervision.
\newblock In \emph{Proceedings of the AAAI Conference on Artificial
  Intelligence}, volume~33, pp.\  4763--4771, 2019.

\bibitem[Ratner et~al.(2020)Ratner, Bach, Ehrenberg, Fries, Wu, and
  R{\'e}]{ratner2020snorkel}
Alexander Ratner, Stephen~H Bach, Henry Ehrenberg, Jason Fries, Sen Wu, and
  Christopher R{\'e}.
\newblock Snorkel: Rapid training data creation with weak supervision.
\newblock \emph{The VLDB Journal}, 29\penalty0 (2):\penalty0 709--730, 2020.

\bibitem[Ratner et~al.(2016)Ratner, De~Sa, Wu, Selsam, and
  R\'{e}]{ratner2016data}
Alexander~J Ratner, Christopher~M De~Sa, Sen Wu, Daniel Selsam, and Christopher
  R\'{e}.
\newblock Data programming: Creating large training sets, quickly.
\newblock In \emph{Advances in Neural Information Processing Systems}, pp.\
  3567--3575, 2016.

\bibitem[R{\'e} et~al.(2020)R{\'e}, Niu, Gudipati, and
  Srisuwananukorn]{re2020overton}
Christopher R{\'e}, Feng Niu, Pallavi Gudipati, and Charles Srisuwananukorn.
\newblock Overton: A data system for monitoring and improving machine-learned
  products.
\newblock \emph{In Proceedings of the 10th Annual Conference on Innovative Data
  Systems Research}, 2020.

\bibitem[Riedel et~al.(2010)Riedel, Yao, and McCallum]{riedel2010modeling}
Sebastian Riedel, Limin Yao, and Andrew McCallum.
\newblock Modeling relations and their mentions without labeled text.
\newblock In \emph{Joint European Conference on Machine Learning and Knowledge
  Discovery in Databases}, pp.\  148--163. Springer, 2010.

\bibitem[Saab et~al.(2020)Saab, Dunnmon, R{\'e}, Rubin, and
  Lee-Messer]{saab2020weak}
Khaled Saab, Jared Dunnmon, Christopher R{\'e}, Daniel Rubin, and Christopher
  Lee-Messer.
\newblock Weak supervision as an efficient approach for automated seizure
  detection in electroencephalography.
\newblock \emph{npj Digital Medicine}, 3\penalty0 (1):\penalty0 1--12, 2020.

\bibitem[Sala et~al.(2019)Sala, Varma, Fries, Fu, Sagawa, Khattar, Ramamoorthy,
  Xiao, Fatahalian, Priest, and Re]{sala2019multi}
Frederic Sala, Paroma Varma, Jason Fries, Daniel~Y Fu, Shiori Sagawa, Saelig
  Khattar, Ashwini Ramamoorthy, Ke~Xiao, Kayvon Fatahalian, James Priest, and
  Chris Re.
\newblock Multi-resolution weak supervision for sequential data.
\newblock In \emph{Advances in Neural Information Processing Systems}, 2019.

\bibitem[Varma \& R{\'e}(2018)Varma and R{\'e}]{varma2018snuba}
Paroma Varma and Christopher R{\'e}.
\newblock Snuba: automating weak supervision to label training data.
\newblock \emph{Proceedings of the VLDB Endowment}, 12\penalty0 (3):\penalty0
  223--236, 2018.

\bibitem[Varma et~al.(2016)Varma, He, Iter, Xu, Yu, De~Sa, and
  R{\'e}]{varma2016socratic}
Paroma Varma, Bryan He, Dan Iter, Peng Xu, Rose Yu, Christopher De~Sa, and
  Christopher R{\'e}.
\newblock Socratic learning: Augmenting generative models to incorporate latent
  subsets in training data.
\newblock \emph{arXiv preprint arXiv:1610.08123}, 2016.

\bibitem[Varma et~al.(2017)Varma, He, Bajaj, Khandwala, Banerjee, Rubin, and
  R{\'e}]{varma2017inferring}
Paroma Varma, Bryan~D He, Payal Bajaj, Nishith Khandwala, Imon Banerjee, Daniel
  Rubin, and Christopher R{\'e}.
\newblock Inferring generative model structure with static analysis.
\newblock In \emph{Advances in Neural Information Processing Systems}, pp.\
  240--250, 2017.

\bibitem[Varma et~al.(2019)Varma, Sala, He, Ratner, and
  R{\'e}]{varma2019learning}
Paroma Varma, Frederic Sala, Ann He, Alexander Ratner, and Christopher R{\'e}.
\newblock Learning dependency structures for weak supervision models.
\newblock \emph{International Conference on Machine Learning}, 2019.

\bibitem[Zamani \& Croft(2018)Zamani and Croft]{zamani2018theory}
Hamed Zamani and W.~Bruce Croft.
\newblock On the theory of weak supervision for information retrieval.
\newblock In \emph{Proceedings of the 2018 ACM SIGIR International Conference
  on Theory of Information Retrieval}, pp.\  147–154, New York, NY, USA,
  2018. Association for Computing Machinery.

\bibitem[Zamani et~al.(2018)Zamani, Croft, and Culpepper]{zamani2018neural}
Hamed Zamani, W.~Bruce Croft, and J.~Shane Culpepper.
\newblock Neural query performance prediction using weak supervision from
  multiple signals.
\newblock In \emph{The 41st International ACM SIGIR Conference on Research and
  Development in Information Retrieval}, SIGIR '18, pp.\  105–114, New York,
  NY, USA, 2018. Association for Computing Machinery.

\bibitem[Zanette et~al.(2018)Zanette, Zhang, and
  Kochenderfer]{zanette2018robust}
Andrea Zanette, Junzi Zhang, and Mykel~J Kochenderfer.
\newblock Robust super-level set estimation using gaussian processes.
\newblock In \emph{Joint European Conference on Machine Learning and Knowledge
  Discovery in Databases}, pp.\  276--291. Springer, 2018.

\bibitem[Zhang et~al.(2014)Zhang, Chen, Zhou, and Jordan]{zhang2014spectral}
Yuchen Zhang, Xi~Chen, Dengyong Zhou, and Michael~I Jordan.
\newblock Spectral methods meet {E}{M}: A provably optimal algorithm for
  crowdsourcing.
\newblock In \emph{Advances in Neural Information Processing Systems}, pp.\
  1260--1268, 2014.

\end{thebibliography}
\bibliographystyle{iclr2021_conference}

\newpage
\appendix

\section{Pseudocode for different IWS scenarios}
\begin{minipage}[t]{0.49\textwidth}
    \begin{algorithm}[H]
            \SetAlgoLined
            \SetKwInOut{Input}{Input}
            \Input{
            $\mathcal{L}$: set of LFs, $T$: max iterations.
            }
            $Q_0 \leftarrow \varnothing$\\
            \For{$t=1, 2, \ldots, T$}{
                $\lambda_t \leftarrow \argmax_{\lambda \in \mathcal{L} \setminus Q_{t-1}} \varphi_t( \lambda)$ \hfill $\triangleright$ Eq.~(\ref{eq:lse_aq})\\
            $u_t \leftarrow \textit{ExpertQuery}(\lambda_t)$ \\
            $Q_t \leftarrow Q_{t-1} \cup \{(\lambda_t, u_t)\}$
            }
            $\hat{\mathcal{L}} \leftarrow \{
            \lambda_j \in \mathcal{D} \hspace{1mm}:\hspace{1mm}
            \argmax_{\mathcal{D} \subseteq \mathcal{L},|\mathcal{D}|=m}
            \hspace{1mm}         
            \sum_{\lambda_j \in \mathcal{D}} (\mathbbm{1}_{\{\mu_j(Q_{T}) > r\}}  (2*\mu_j(Q_{T})-1)*\hat{l}_j )
            \}$ \\
            %
            %
            \caption{\textbf{Interactive Weak Supervision (IWS-LSE-ac).
            }}
            \label{alg:iwsLSEac}
    \end{algorithm}
\end{minipage}
\hfill
\begin{minipage}[t]{0.47\textwidth}
    \begin{algorithm}[H]
            \SetAlgoLined
            \SetKwInOut{Input}{Input}
            \Input{
            $\mathcal{L}$: set of LFs, $T$: max iterations.
            }
            $Q_0 \leftarrow \varnothing$\\
            \For{$t=1, 2, \ldots, T$}{
                $\lambda_t \leftarrow \argmax_{\lambda \in \mathcal{L} \setminus Q_{t-1}} \mu(Q_{t-1})$\\
            $u_t \leftarrow \textit{ExpertQuery}(\lambda_t)$ \\
            $Q_t \leftarrow Q_{t-1} \cup \{(\lambda_t, u_t)\}$
            }
            $\hat{\mathcal{L}}  \leftarrow \left\{
    \lambda_j \in Q_T \hspace{1mm}:\hspace{1mm}
    u_j= 1
    \right\}$\\
    \ \\
    \ \\
            \caption{\textbf{Interactive Weak Supervision with Active Search (IWS-AS)
            }}
            \label{alg:iwsAS}
    \end{algorithm}
\end{minipage}

\benediktx{
Here we provide pseudocode for the IWS-AS and IWS-LSE-ac settings, while the procedure for IWS-LSE-a can be found in the main paper in Algorithm~\ref{alg:iws}. Let us recap why we arrive at different formulations for IWS. In Sec.~\ref{sec:IWS}, we distinguish three scenarios for arriving at a final set of weak supervision sources which are modeled to obtain an estimate of the latent class variable $Y$. All three scenarios lead to different definitions of an optimal final set of LFs, which in turn means that they require us to formulate appropriate acquisition functions to achieve a good estimate of the optimal set within a budget of $T$ expert interactions. 
In scenario (A), we place no restrictions on the size of the final set and any LF can be included in it. This means that we have the computational resources to model a potentially very large number of weak supervision sources, and we do not require domain experts to inspect and validate every single LF that is modeled. Importantly, this means that we can and should include LFs that are good according to our predictive model and our definition of the optimal final set of LFs, but have never been shown to an expert. 
In scenario (B), the final set is limited in size but LFs do not have to be inspected and validated by a user. This scenario may be attractive for rapid cycles during development when a very large number of LFs becomes computationally prohibitive. Finally, in scenario (C), only LFs inspected and validated by experts may be included, e.g. due to security or legal considerations.}

\section{Additional Experiments and Results}
\begin{figure}
    \centering
    \includegraphics[height=1.9in]{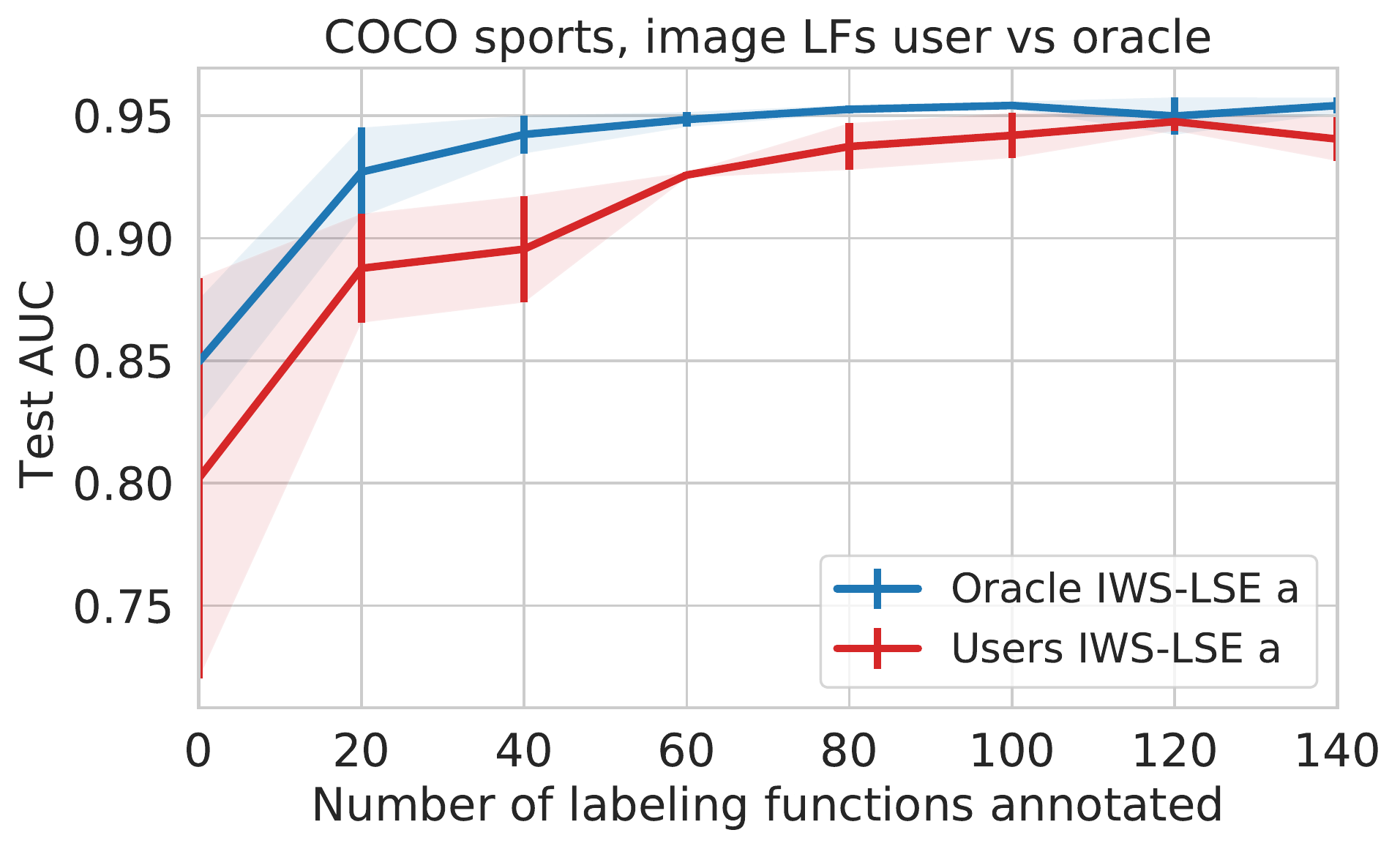}
    \includegraphics[height=1.9in]{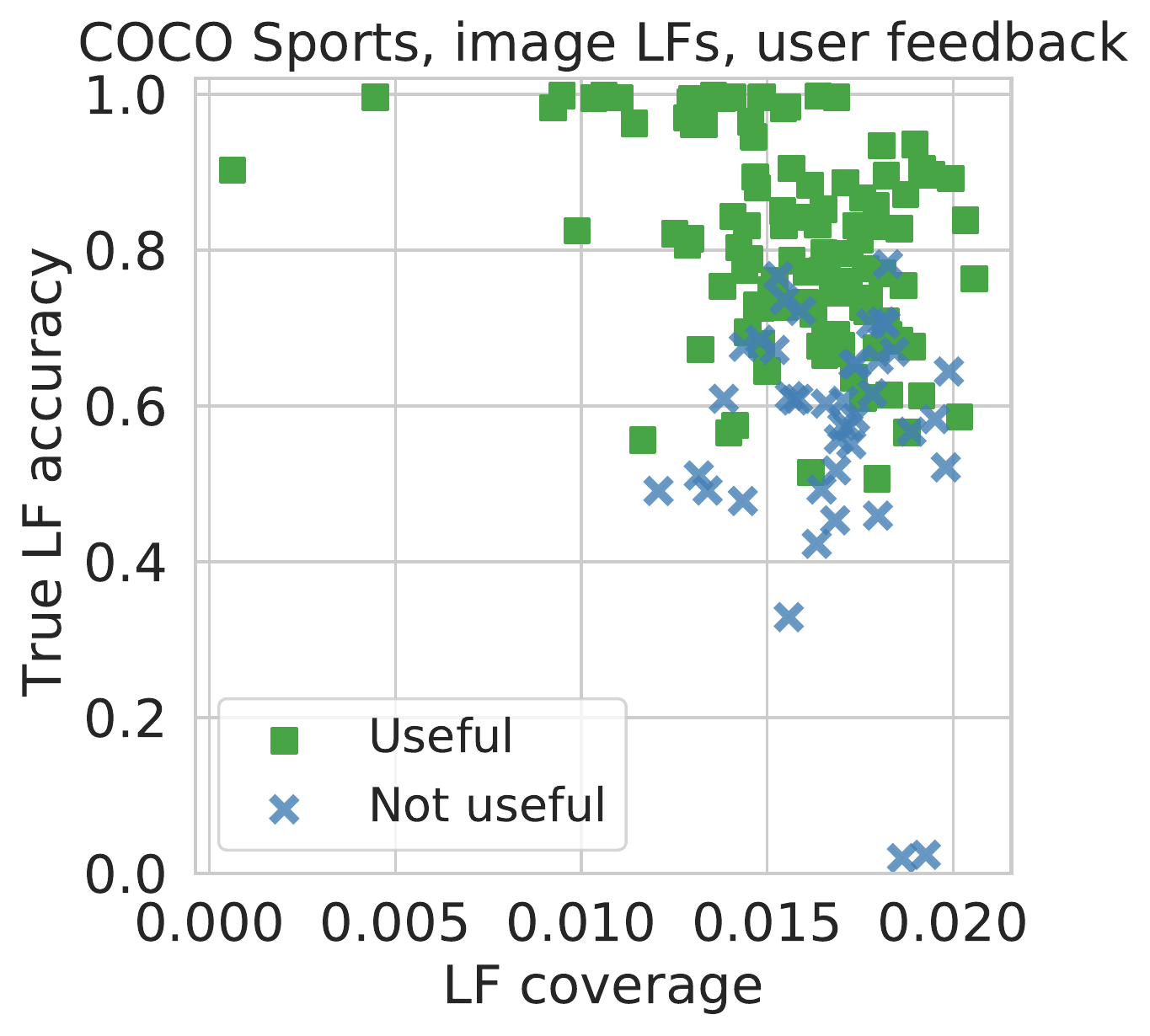}
    \caption{\benediktx{\textbf{Human user study, image data (Section~\ref{sec:user_imageLFs}).} The user experiments in this plot were done using a labeling function family defined directly on the images. 
             \textit{Left:} Test AUC of end classifiers trained on soft labels obtained via IWS-LSE-a. Test set performance of humans closely tracks performance using a simulated oracle after $\sim$100 iterations on these datasets. \textit{Right:} scatter plots showing the true LF accuracy vs LF coverage of responses to queries by one user.}}
    \label{fig:coco_users}
\end{figure}

\subsection{User Experiments on Images with Image Labeling Functions}\label{sec:user_imageLFs}
\benediktx{We carried out a user study on the COCO Sports image classification
task described in Section~\ref{sec:experiments}, using a family of mutual nearest neighbor image labeling functions, also described in Section~\ref{sec:experiments}. In line with our experiments on text data, Figure~\ref{fig:coco_users} shows that users were able to judge the accuracy of LFs consistently and well, and that the performance of IWS closely tracks the simulated oracle performance after about 100 iterations. 

Again, users were quite quick at responding to LF queries, and judging LFs to be predictive of the latent class variable appeared to be an intuitive task. The average user response time to these image LF queries was 8.8 seconds, while the response time for annotating individual images was around 4.1 seconds on average. To assess an LF, a human user was shown the LFs MkNN image cluster of up to 20 images (the mean size was 7.9 images), and 15 random images contained in the extended cluster, sorted according to their mean distance to the MkNN image cluster. For this nearest neighbor-based family of LFs (as described in Section~\ref{sec:experiments}), the parameter $k_1$ was set to 20, and $k_2$ to $1500$---though we found
that performance was robust to changes in these parameters.
While our results show that IWS performs well in this setting, and that classifiers can be trained competitively compared to active learning, it is an interesting challenge to develop better image primitives from which labeling functions can be constructed in data programming, and generated in IWS and Snuba. 
}

\subsection{Experiment and Implementation Details}\label{sec:appendix01}
\paragraph{Datasets} For our text data experiments, we use three publicly available datasets \footnote{Amazon: \url{https://nijianmo.github.io/amazon/index.html}, IMDB: \url{https://ai.stanford.edu/~amaas/data/sentiment/}, BiasBios: \url{http://aka.ms/biasbios}} to define six binary text classifaction tasks. We use a subset of the Amazon Review Data~\cite{he2016ups} for sentiment classification, aggregating all categories with more than $100k$ reviews from which we sample $200k$ reviews and split them into $160k$ training points and $40k$ test points. We use the IMDB Movie Review Sentiment dataset~\cite{maas2011learning} which has $25k$ training samples and $25k$ test samples. In addition, we use the Bias in Bios~\cite{de2019bias} dataset from which we create binary classification tasks to distinguish difficult pairs among frequently occurring occupations. Specifically, we create the following subsets with equally sized train and test sets: journalist or photographer ($n=32\,258$), professor or teacher ($n=24\,588$), painter or architect ($n=12\,236$), professor or physician ($n=54\,476$).

For the cross-modal tasks of text captions and images as well as the pure image task we use the COCO dataset \cite{lin2014microsoft}. We take the official validation set ($n=4952$) as the test set. This set of test images is never used at any other point in the pipeline.

\paragraph{Implementation Details}
Our \textbf{probabilistic ensemble} in IWS, which is used in all acquisition functions to learn $p(u_j=1|Q_t)$, 
is a bagging ensemble of $s=50$ multilayer perceptrons with two hidden layers of size 10, RELU activations, sigmoid output and logarithmic loss. To create features for the $p$ candidate LFs in $\mathcal{L}$, we use singular value decomposition (SVD) to project from $n$ to $d'=150$. Thus, at iteration $t$, given a query dataset $Q_{t-1}=\{(\lambda_j,u_j)\}_{j=1}^{t-1}$, the ensemble is trained on pairs $\{(\tau(\lambda_j),u_j)\}_{j=1}^{t-1}$ where  $\tau(\lambda_j)$ are the SVD features and $u_j$ the binary expert responses.
The output of the ensemble on LFs not in the query dataset is used to compute $\sigma_j(Q_{t-1}) = \sqrt{\text{Var}[g^{-1}(p(u_j=1 | Q_{t-1}))]}$ and $\mu_j(Q_{t-1}) = \mathbb{E}[g^{-1}(p(u_j=1 | Q_{t-1}))]$. While $g$, which maps $\alpha_j$ to $v_j$, could be fine-tuned from data, we set $g$ as the identity function in our experiments, which we find works well empirically.  Finally, to allow human experts to express some level of confidence about their decision on $u_j$, we also collect corresponding uncertainty weights  $b_j \in \{1,0.5\}$, and we multiply the contribution to the loss of each $u_j$ by the respective weight $b_j$. Users can also skip queries if they are unsure, indicated in black in Fig.~\ref{fig:user_oracle}. These unsure responses are still counted as an iteration/query in our plots.

Our downstream \textbf{end classifier} $f$ is a multilayer perceptron with two hidden layers of size 20 and RELU activations,  sigmoid output and logarithmic loss. Each model in the ensemble as well as $f$ are optimized using Adam~\cite{kingma2014adam}.
For the text datasets, we fit the end models $f$  to low dimensional projections of a large bag-of-words matrix via truncated Singular Value Decomposition (SVD),  fixing the embedding size to $d=300$. We repeat each experiment ten times. We assume that the class balance is known when fitting the label model, as common in related work. When class balance is unknown, \cite{ratner2019training} discuss an unsupervised approach to estimate it. \benediktx{For the COCO image experiments, we use  the second-to-last layer of a ResNet-18~\cite{he2016deep} pretrained on ImageNet to obtain image features. These image features are used as the embedding to train the end classifier for all approaches which we compare. The embeddings are also used to create the nearest-neighbor based image LFs.}

The first 8 iterations of IWS are initialized with queries of four LFs known to have accuracy between $0.7$ and $0.75$ drawn at random and four randomly drawn LFs with arbitrary accuracy. Subsequently, IWS chooses the next LFs to query. Active learning is initialized with the same number of known samples.

\subsection{Full IWS Results and All Baselines}
\label{sec:appbbaselines}
\begin{figure}
    \centering
    \includegraphics[width=1.\textwidth]{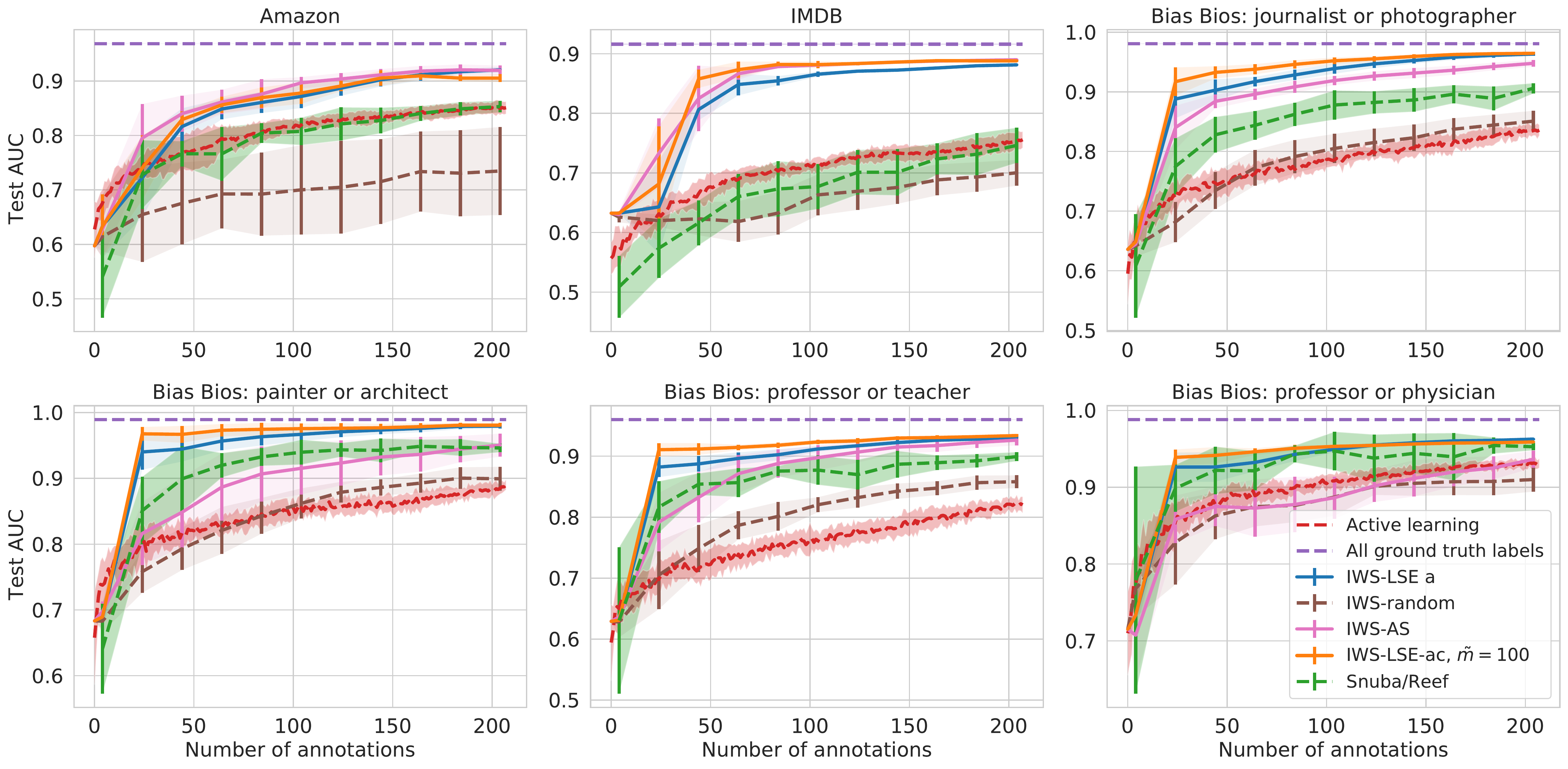}
    \caption{Mean test set AUC vs.\ number of iterations for end classifiers trained on probabilistic labels. IWS-LSE and IWS-AS are compared to  active learning, Snuba, training on all labels, and IWS with a random acquisition function. Note that, while one iteration on this corresponds to one expert label, a comparison of effort needed to answer each type of query (label for sample vs label for LF) will vary by application.}
    \label{fig:oracle_exp_amazonimdb_all}
    \vspace{-4mm}
\end{figure}
Fig.~\ref{fig:oracle_exp_amazonimdb_all} provides the full results of all IWS settings to all baselines, including IWS with a random acquisition function (IWS-random). IWS LSE-a corresponds to scenario (A) where there are no restrictions on the size of the final set and any LF can be included]. IWS LSE-ac corresponds to scenario (B) where the final set LFs is limited in size (e.g. due to computational considerations) but any LF can be included. IWS-AS corresponds to Scenario (C), where only LFs in our query dataset $Q_t$ can be used, which are LFs that were inspected and validated by experts, e.g. due to security or legal considerations.

\subsection{User experiments}\label{sec:interface}
\subsubsection{Interface and experiment prompt}
Fig.~\ref{fig:interface} shows an example of the prompt that was shown to users at each iteration of the IWS user experiments. Before the experiment started, users were first instructed on the interface they would see and the task they would be given, i.e. to label a heuristic as good if they would expect it to label samples at better than random accuracy and as bad otherwise. Users were also instructed about the response options, including the option to not answer a query if they were unsure (`I don't know'). 

Users were given a description of the classification task and domain of the documents for which heuristics were being acquired. Users were also provided with a description of the heuristic generated which labeled samples with a target label if a document contained a certain term. Finally, users were given two examples of a better than random heuristic, and two examples of an arbitrary heuristic. 

During the experiments, users were also provided with 4 random examples of documents documents where the  queried LF applied. Users were instructed to first consider the LF without inspecting these random samples, and to only consider the examples if necessary.

While LFs receive binary labels, users were allowed to express uncertainty about their decision, which was used as a sample weight ($1$ if certain else $0.5$) of LFs during training of the probabilistic model of user feedback.
\begin{figure}
    \centering
    \includegraphics[width=0.85\textwidth]{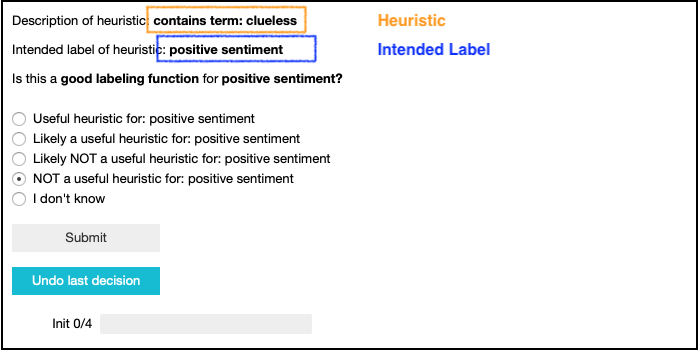}
    \caption{An example of the prompt and answer options that users were shown during the user study. Before starting the experiment, users were provided with a description of the task and the labeling function family.}
    \label{fig:interface}
\end{figure}
\subsubsection{Additional statistics of user experiments}
In Fig.~\ref{fig:user_extended} we provide more details about our user experiments. The top row displays the test set performance of downstream model $f$ for each individual user. The middle row shows how the number of LFs determined by the user to be useful $u=1$ increases with the number of iterations. The bottom row displays the maximum positive correlation between a new LF with $u=1$ at iteration $t$ and all previously accepted LFs with $u=1$ up to iteration $t$. Note that we take abstains into account by computing correlation between and LF $i$ and $j$ only on entries where at least one of them is nonzero

\begin{figure}
    \centering
    \includegraphics[width=0.49\textwidth]{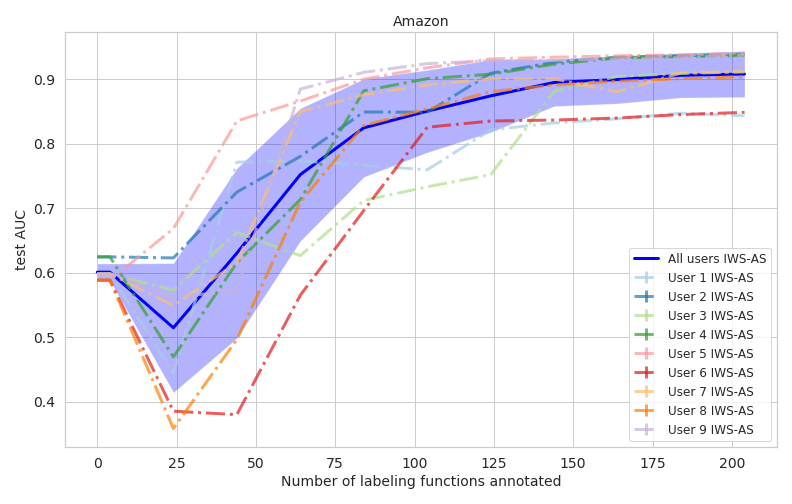}
    \includegraphics[width=0.49\textwidth]{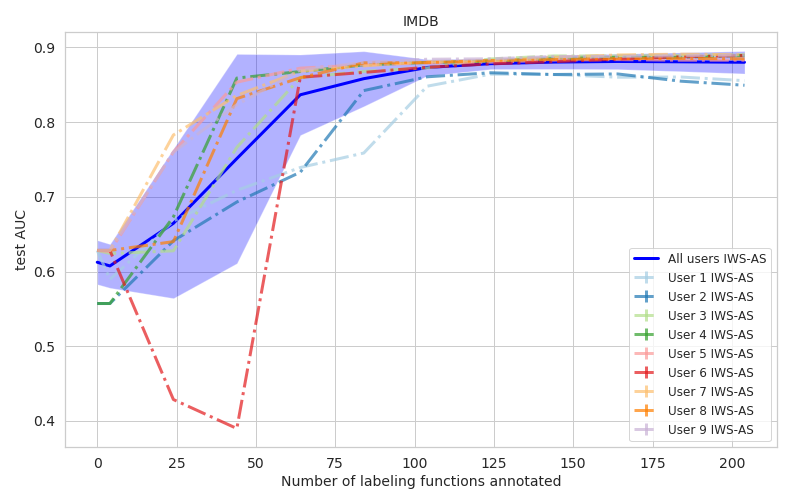}
    
    \includegraphics[width=0.49\textwidth]{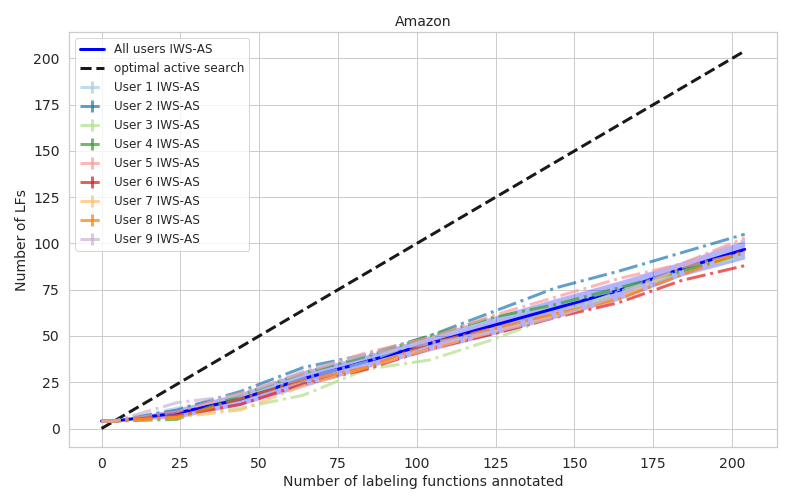}
    \includegraphics[width=0.49\textwidth]{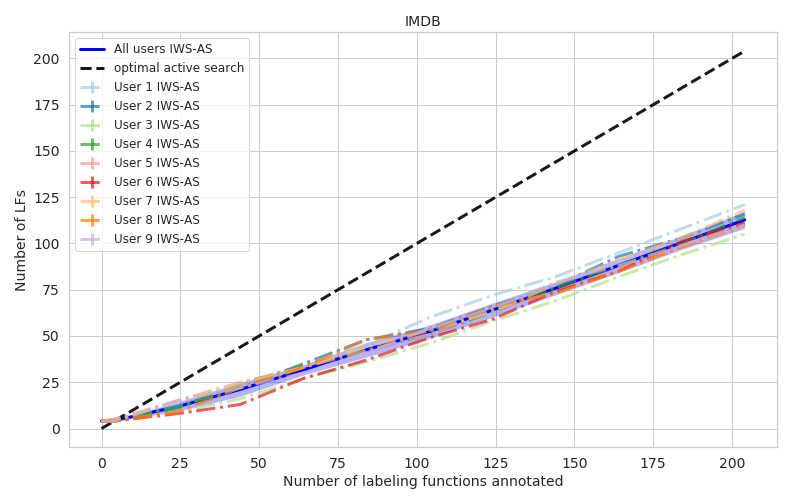}
    \includegraphics[width=0.49\textwidth]{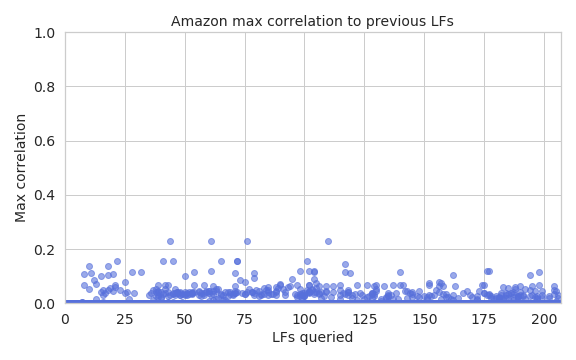}
    \includegraphics[width=0.49\textwidth]{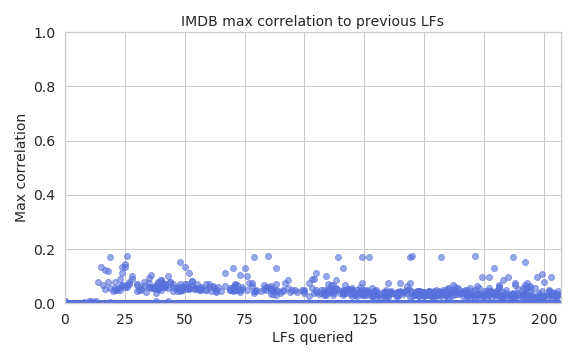}
    \caption{Test AUC vs. IWS iteration shown for individual user experiments with IWS-AS (\textit{top}). Number of LFs labeled as useful vs. IWS iterations (\textit{middle}). Maximum correlation to previously accepted LFs vs. number of iterations (\textit{bottom}).}
    \label{fig:user_extended}
\end{figure}

\subsection{Heuristics found}
\subsubsection{IWS with real users, sentiment classification}
We here provide some concrete examples of heuristics
found during the IWS-AS procedure, that real users annotated
as useful during the experiments. For the IMDB dataset:
\begin{itemize}
    \item Ten terms most frequently annotated as useful by users
    \begin{itemize}
        \item Class=1: beautiful, wonderful, perfect, enjoyed, amazing, brilliant, fantastic, superb, excellent, masterpiece.
        
        \item Class=0: worst, poor, awful, bad, waste, terrible, horrible, boring, crap, stupid.
    \end{itemize}
    \item Ten terms annotated as useful with highest underlying  accuracy
    \begin{itemize}
        \item Class=1: flawless, superbly, perfection, wonderfully, captures, refreshing, breathtaking, delightful, beautifully, underrated.
        \item Class=0: stinker, dreck, unwatchable, unfunny, waste, atrocious, pointless, redeeming, laughable, lousy.
    \end{itemize}
    \item Ten terms annotated as useful by users, selected at random
    \begin{itemize}
        \item Class=1: favorites, joy, superbly, delight, wonderfully, art, intelligent, terrific, light, finest.
        \item Class=0: reason, failed, atrocious, decent, unfunny, lame, ridiculous, mistake, worst, dull.
    \end{itemize}
\end{itemize}

For the Amazon dataset:
\begin{itemize}
    \item Ten terms most frequently annotated as useful by users
    \begin{itemize}
        \item Class=1: wonderful, beautiful, amazing, fantastic, favorite, awesome, love, best, perfect, easy.
        
        \item Class=0: worst, terrible, horrible, awful, worse, boring, poor, bad, waste, garbage.
    \end{itemize}
    \item Ten terms annotated as useful with highest underlying  accuracy
    \begin{itemize}
        \item Class=1: compliments, delightful, pleasantly, stars, captivating, excellent, awesome, beautifully, comfy, perfect.
        \item Class=0: poorly, worthless, disappointing, refund, waste, yuck, garbage, unusable, useless, junk.
    \end{itemize}
    \item Ten terms annotated as useful by users, selected at random
    \begin{itemize}
        \item Class=1: interesting, beautifully, value, loves, strong, expected, gorgeous, perfectly, durable, great.
        
        \item Class=0: sent, zero, money, mess, crap, refund, wasted, joke, unusable, beware.
    \end{itemize}
\end{itemize}

\subsubsection{IWS with an oracle, occupation classification}
\benediktx{
We here provide examples of heuristics
found during the IWS-LSE procedure using a simulated oracle, on the BiasBios biographies datasets. We believe that real users ('internet biography experts') would be able to make very similar distinctions. 
\\
For the 'Bias Bios: journalist or photographer'  dataset, the ten terms most frequently annotated as useful by the oracle were:
\begin{itemize}
        \item Class = 1: photography, clients, fashion, studio, photographer, art, commercial, fine, creative, advertising.
        \item Class = 0: 
journalism, writing, reporting, news, media, writer, writes, editor, reporter, newspaper.
\end{itemize}
For the 'Bias Bios: painter or architect'  dataset. Ten terms most frequently annotated as useful by oracle:
\begin{itemize}
        \item Class = 1: commercial, buildings, development, residential, planning, architects, firm, master, design, construction.
        \item Class = 0: 
painting, museum, collections, exhibition, gallery, born, artists, shows, series, art.
\end{itemize}
For the 'Bias Bios: professor or physician'  dataset, the ten terms most frequently annotated as useful by the oracle were:
\begin{itemize}
        \item Class = 1: medical, orthopaedic, residency, family, practice, surgery, memorial, general, physician, saint.
        \item Class = 0: 
studies, phd, science, teaching, engineering, received, interests, member, published, professor.
\end{itemize}
For the 'Bias Bios: professor or teacher'  dataset, the ten terms most frequently annotated as useful by the oracle were:
\begin{itemize}
        \item Class = 1: students, english, teacher, schools, enjoys, years, life, classroom, children, elementary.
        \item Class = 0: 
review, research, interests, published, editor, university, journals, associate, studies, phd.
\end{itemize}

}

\subsection{Ablation of IWS parameter settings}\label{sec:ablation}
In this section we provide results of ablation experiments for IWS. The IWS-LSE algorithm requires us to set a threshold $r$ on the (unknown) LF accuracy around which our model aims to partition the set of candidate LFs. Fig.~\ref{fig:straddle_ablation_r} provides results for different $r$ threshold settings for IWS-LSE-a and IWS-LSE-ac, correspondin to Scenario (A) and Scenario (B). The figure shows that the algorithms perform well across a wide range of $r$. While there is no clear, distinct performance difference discernible, the figure suggest that a threshold too close to $1.0$ can cause the algorithm to under-perform. A possible explanation is that as it stifles exploration of LFs within the limited budget of queries to users. 

In Scenario (B), which corresponds to the IWS-LSE-ac algorithm, our aim to find a final set of LFs of limited size. Fig.~\ref{fig:straddle_ablation}  shows that a wide range ($\tilde{m} =$ 50 to 200) of final set sizes produce good results. Recall that in our experiments, we bound the size of the final set of LFs at each iteration $t$ by $m=\sum_{i=1}^{t-1}u_i+\tilde{m}$, i.e. the number of LFs so far annotated as $u=1$ plus a constant $\tilde{m}$.


\begin{figure}
    \centering
    \includegraphics[width=0.49\textwidth]{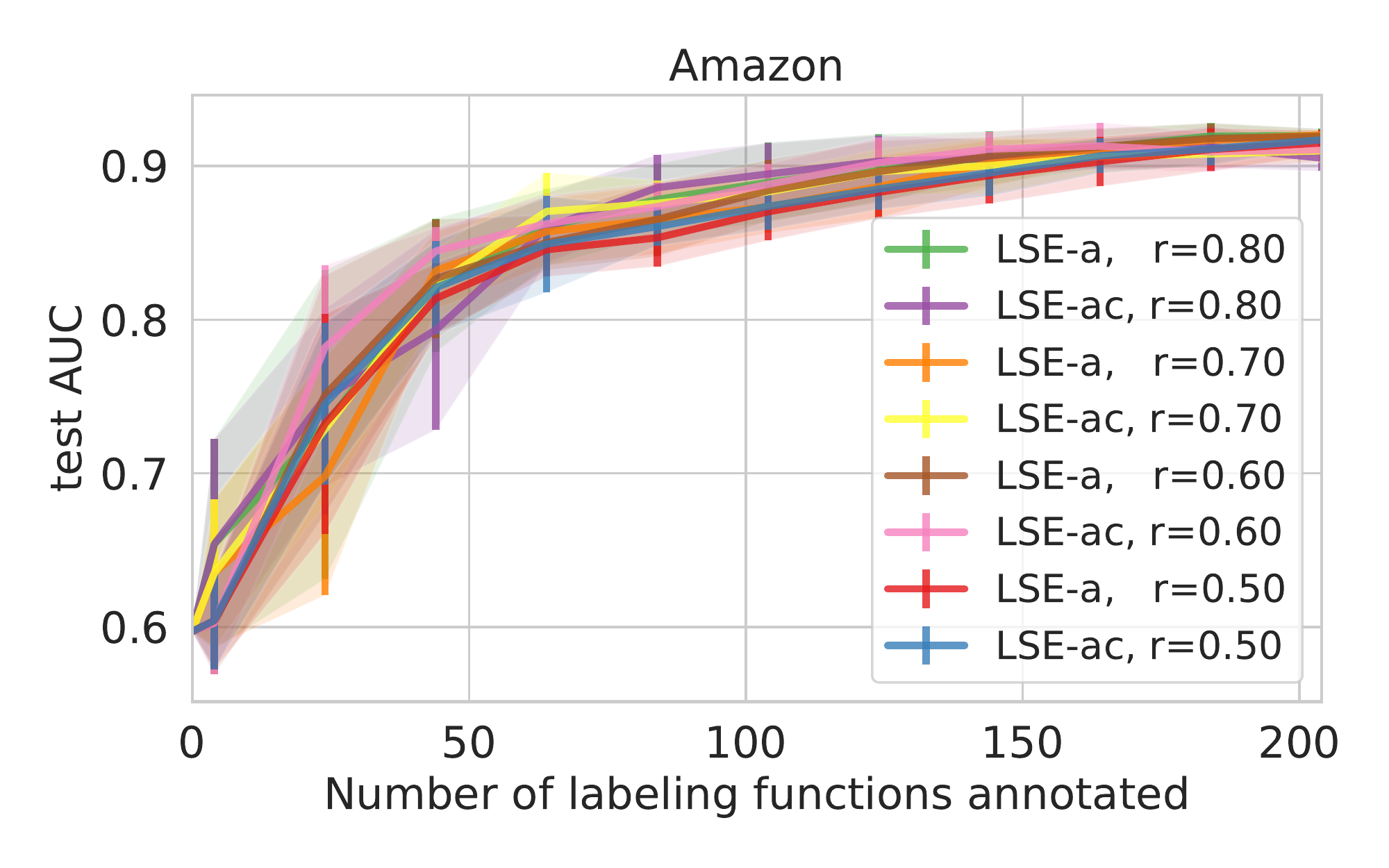}
    \includegraphics[width=0.49\textwidth]{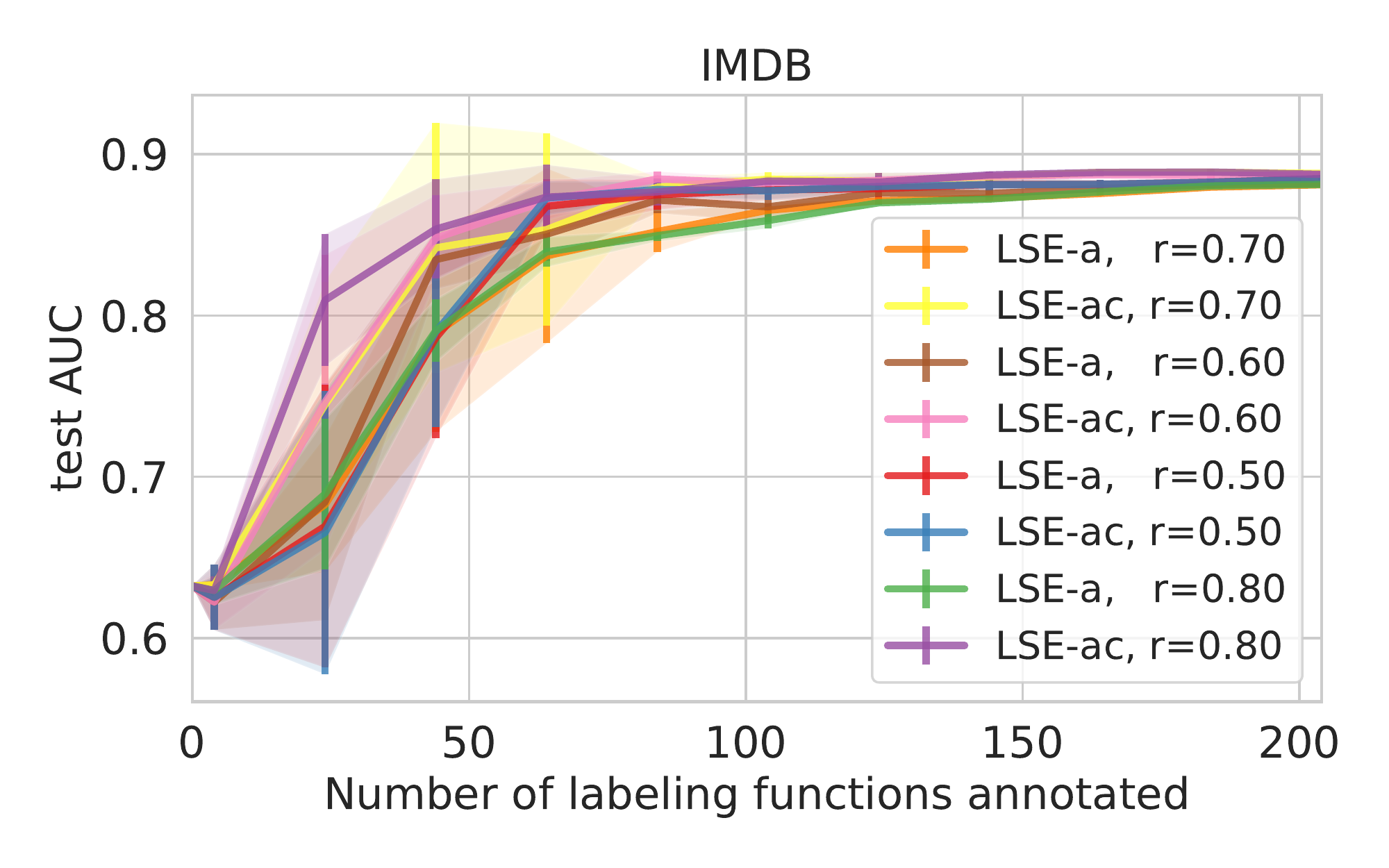}
    \includegraphics[width=0.49\textwidth]{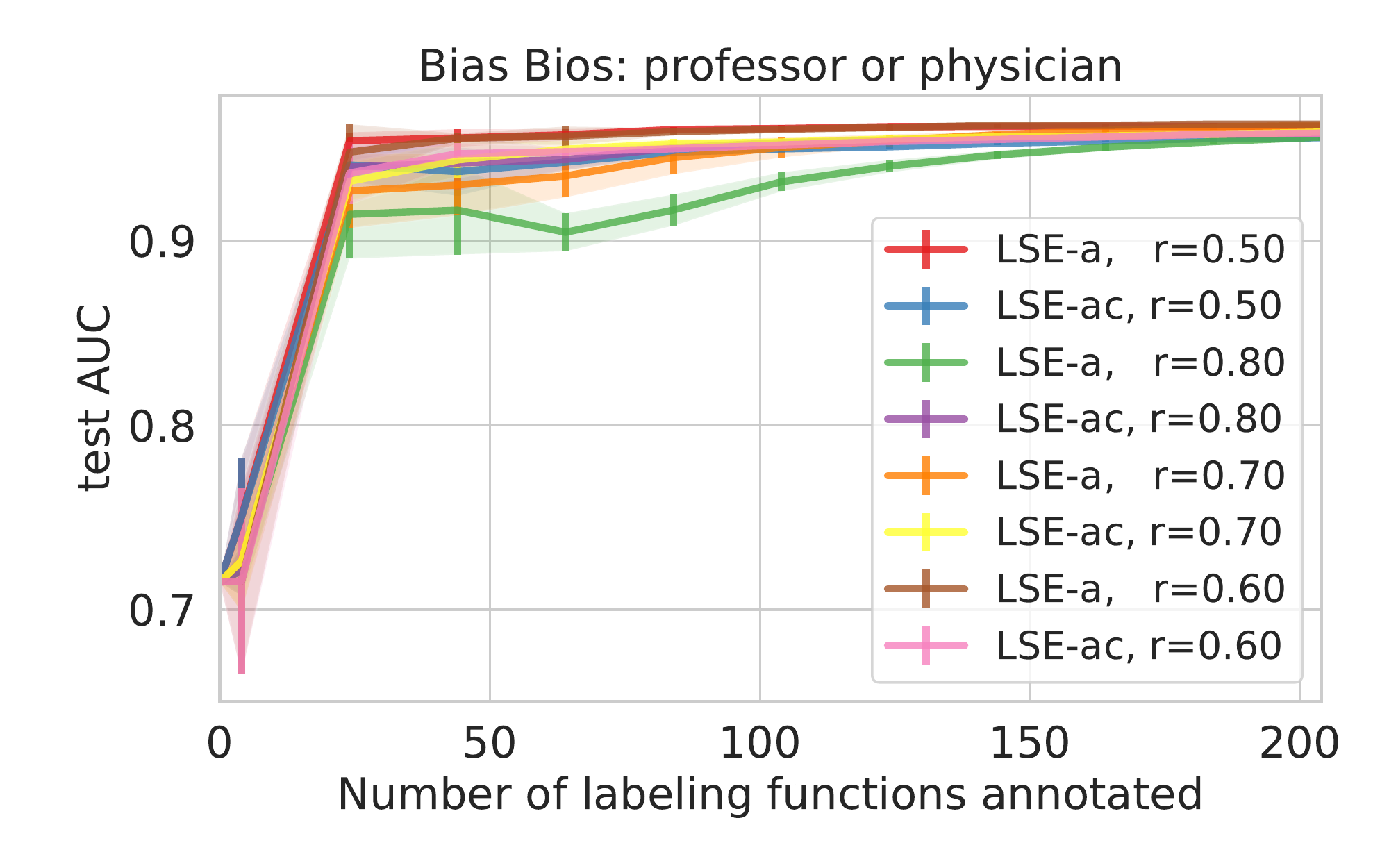}
    \includegraphics[width=0.49\textwidth]{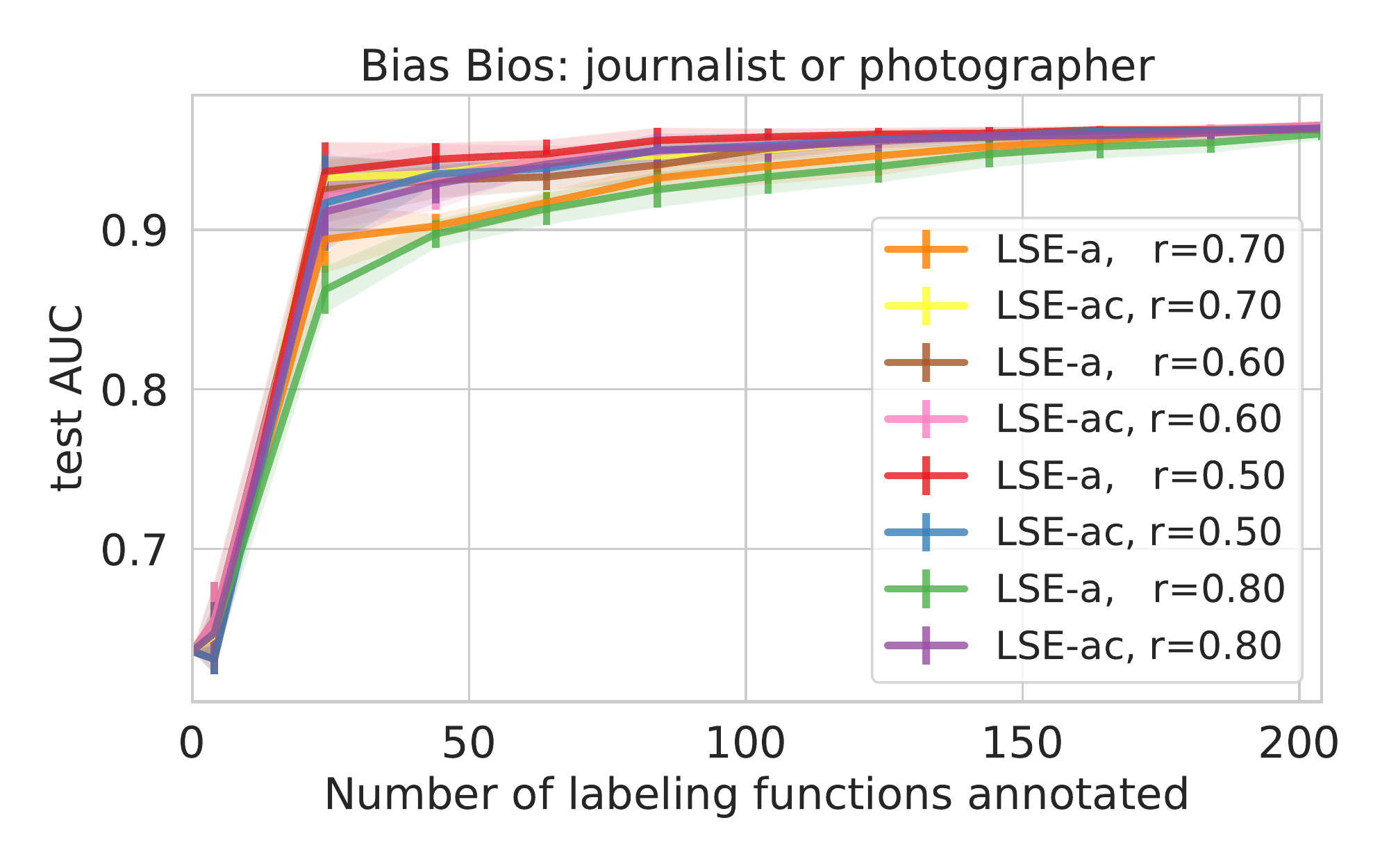}
    \includegraphics[width=0.49\textwidth]{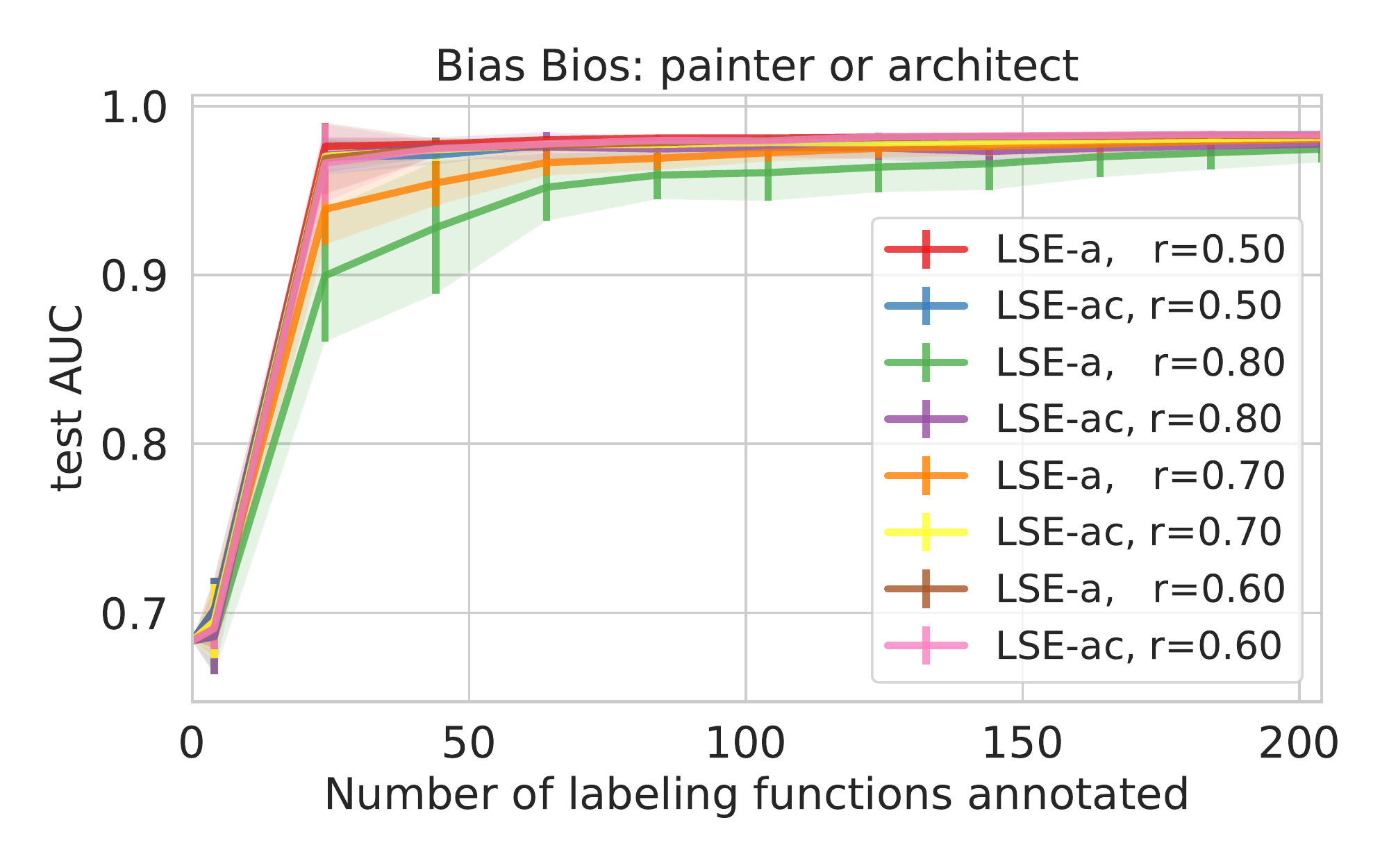}
    \includegraphics[width=0.49\textwidth]{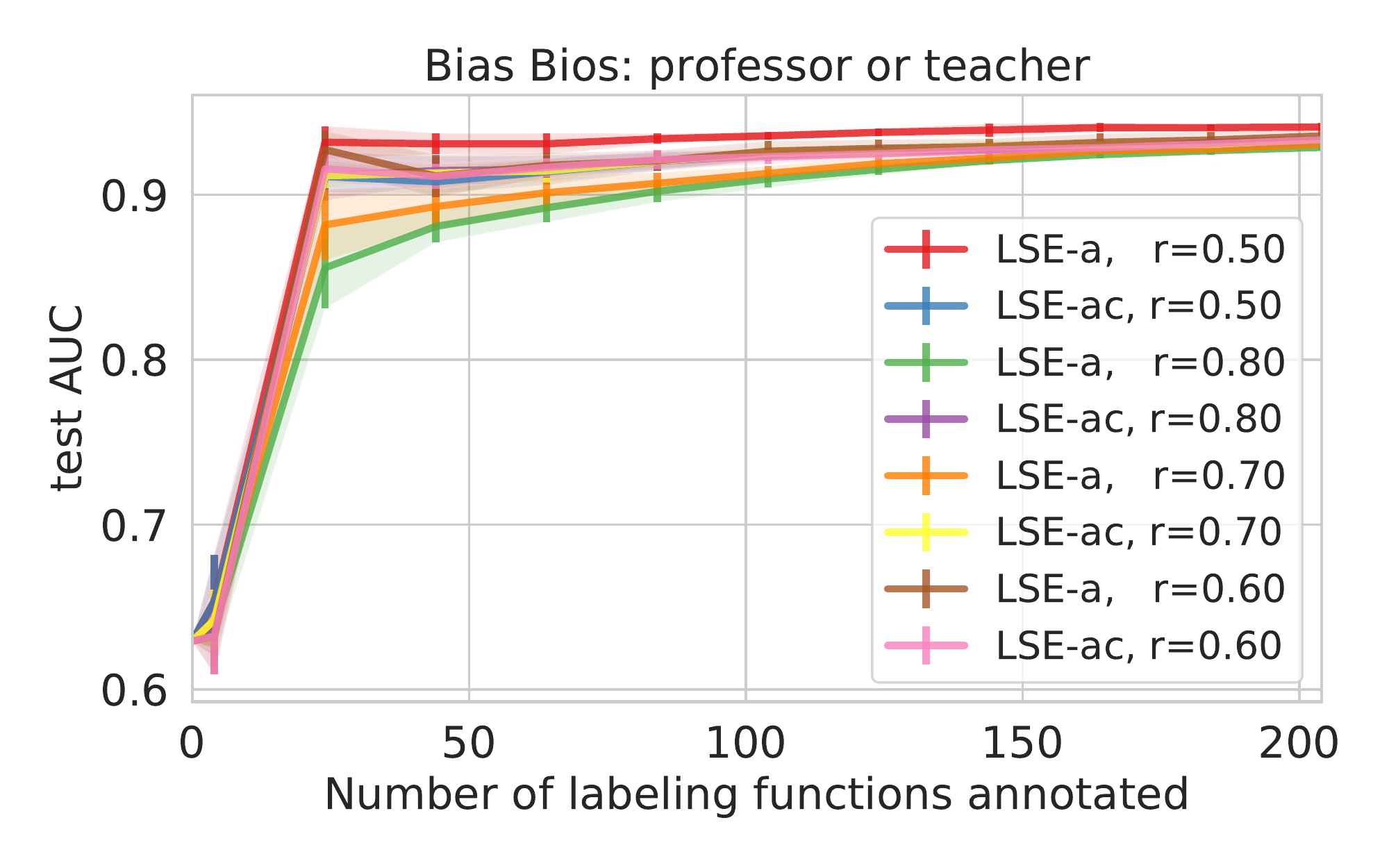}
    \caption{IWS-LSE ablation plots for varying thresholds $r$ which we use to partition our set of LFs. On all datasets test set performance is very similar after around 100 iterations, showing that a wide range of such thresholds leads to good test set performance. For IWS-LSE-ac shown in this plot $\tilde{m}$ was set to $100$.}
    \label{fig:straddle_ablation_r}
\end{figure}
\begin{figure}
    \centering
    \includegraphics[width=0.49\textwidth]{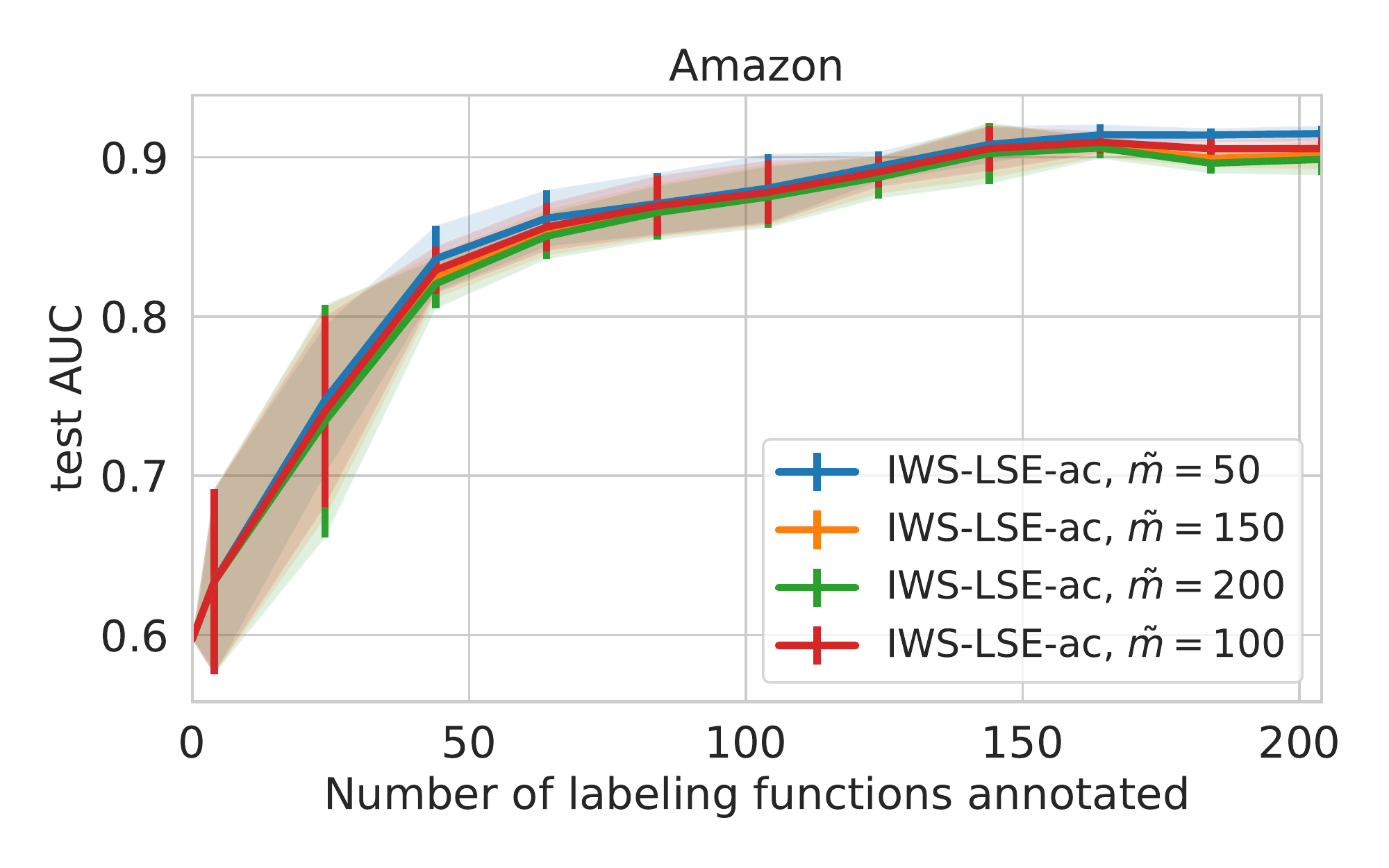}
    \includegraphics[width=0.49\textwidth]{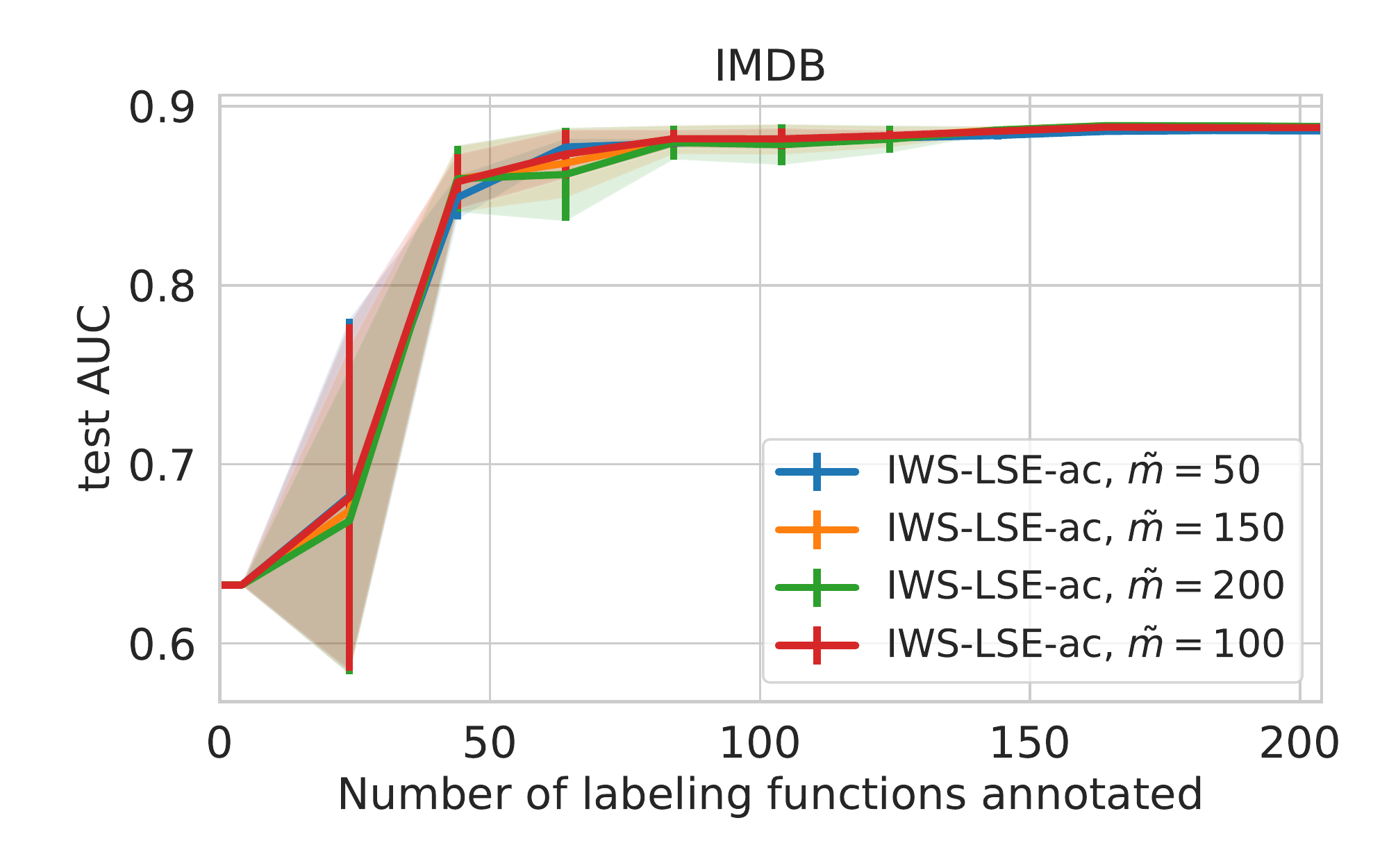}
    \includegraphics[width=0.49\textwidth]{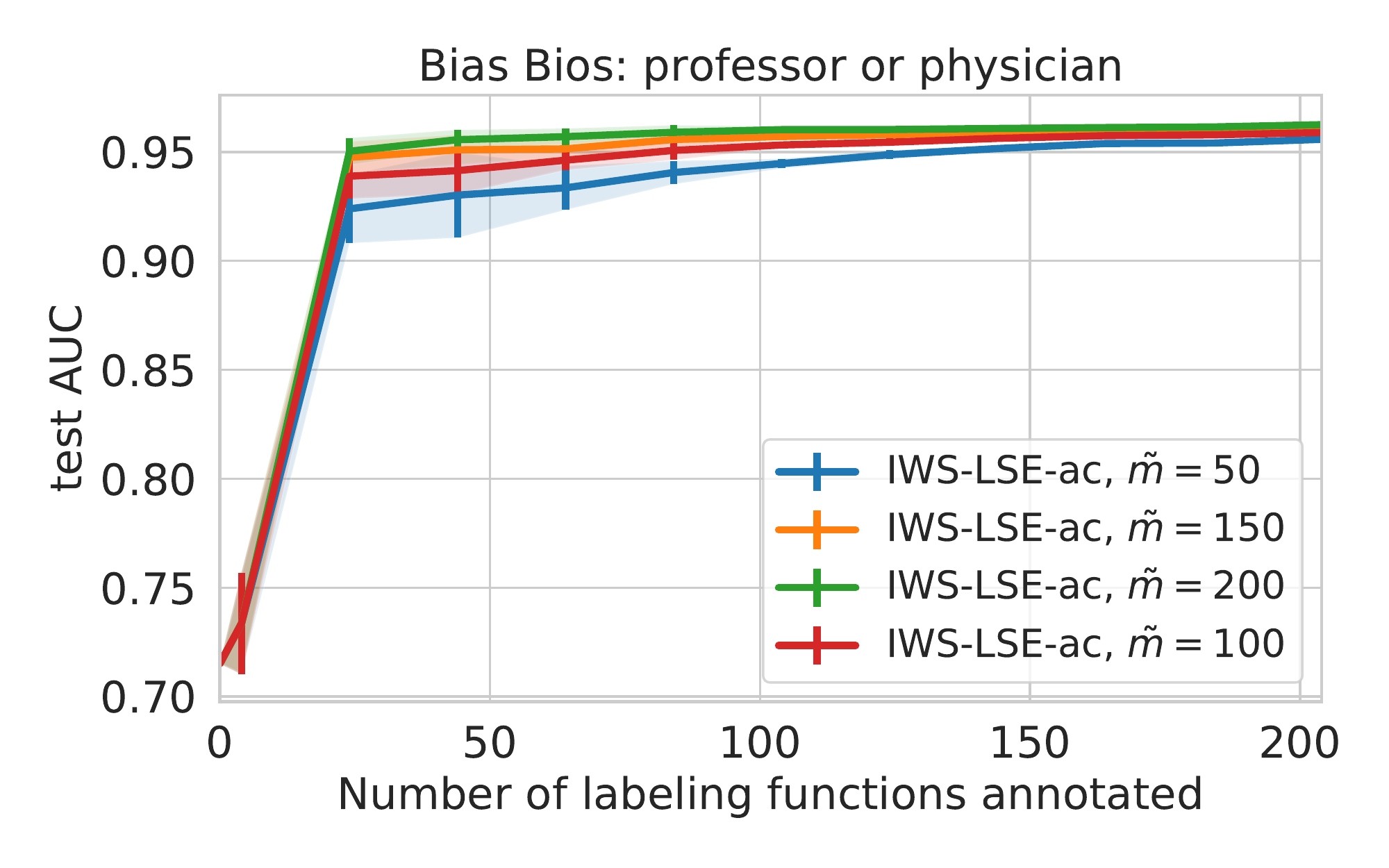}
    \includegraphics[width=0.49\textwidth]{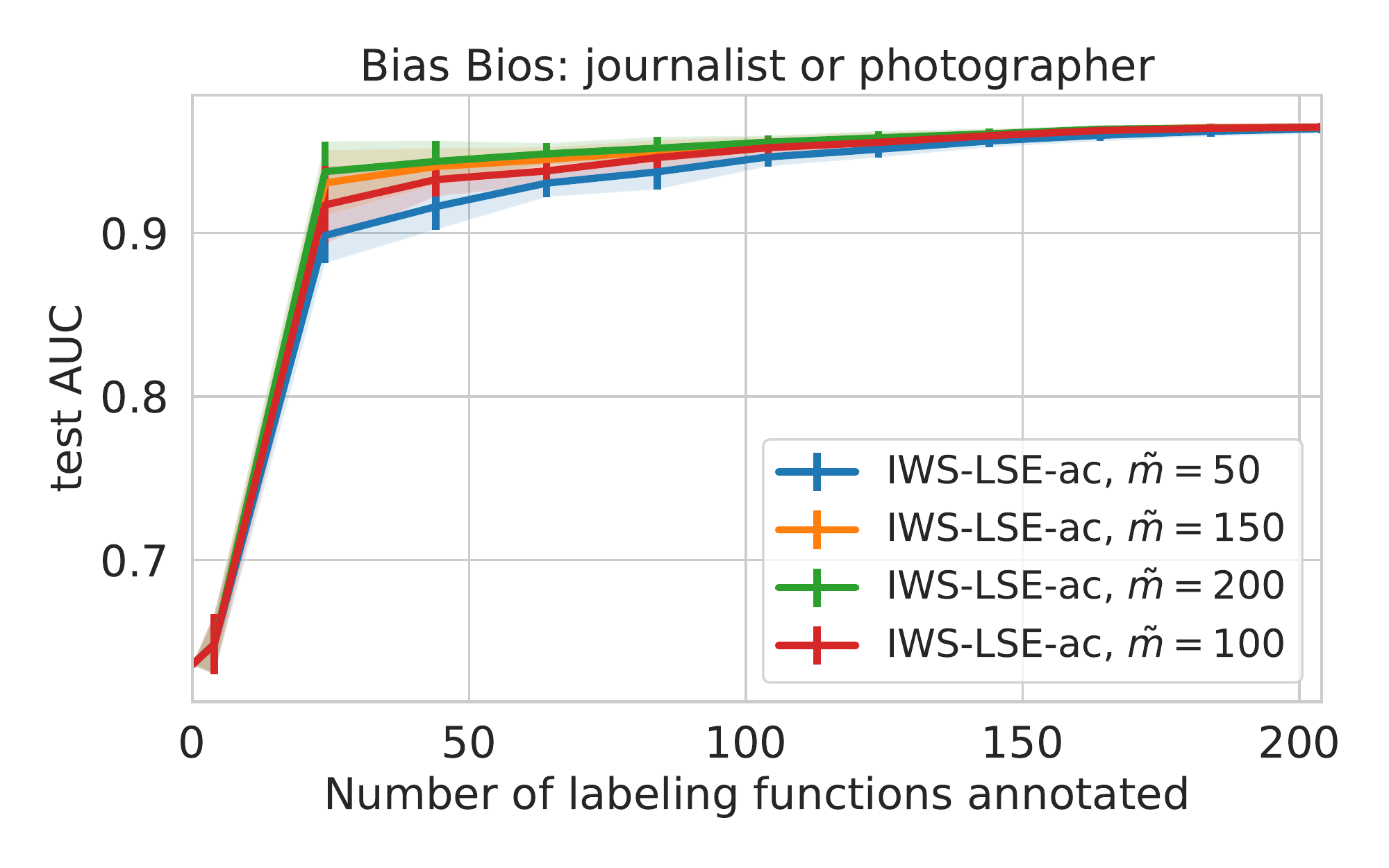}
    \includegraphics[width=0.49\textwidth]{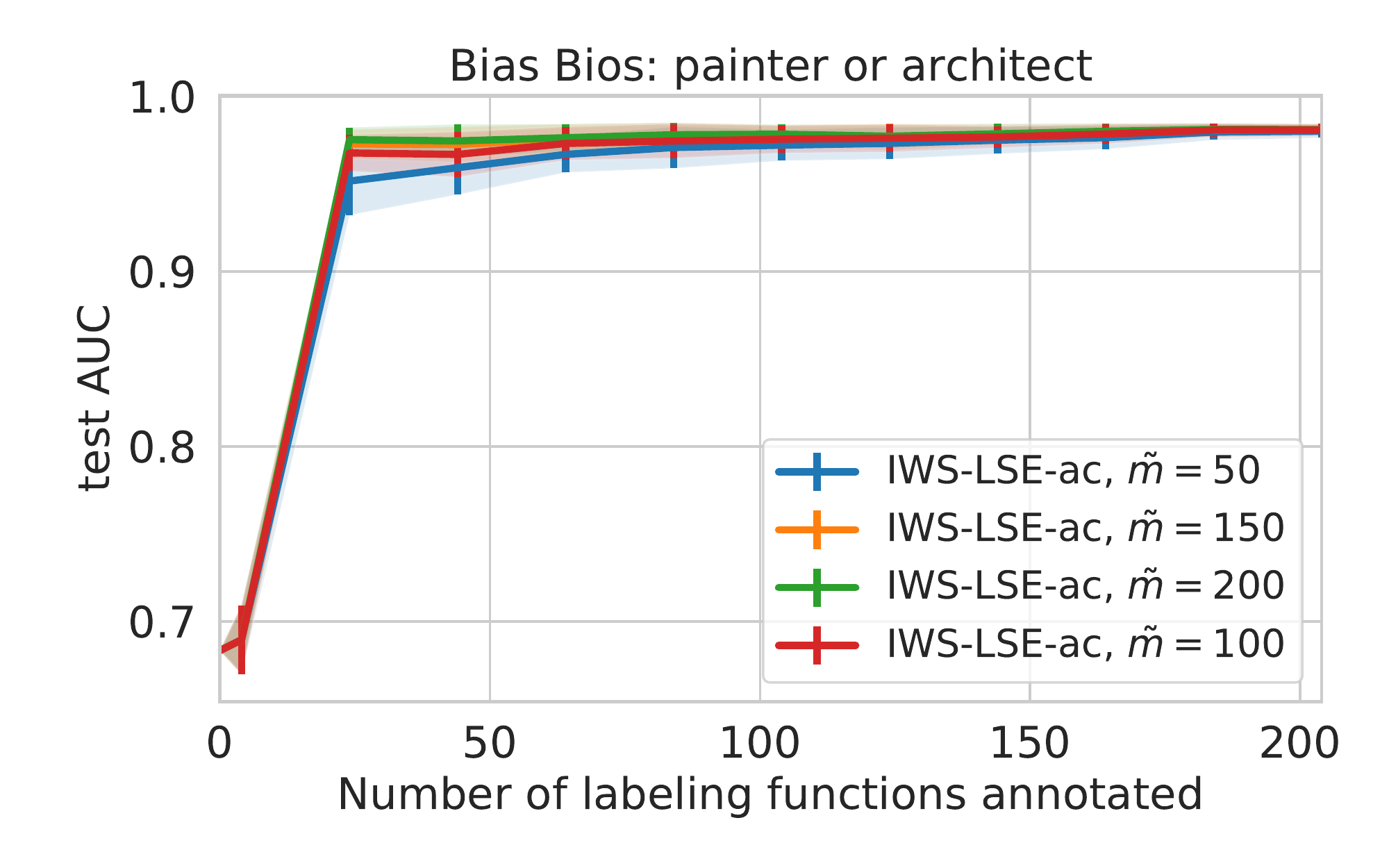}
    \includegraphics[width=0.49\textwidth]{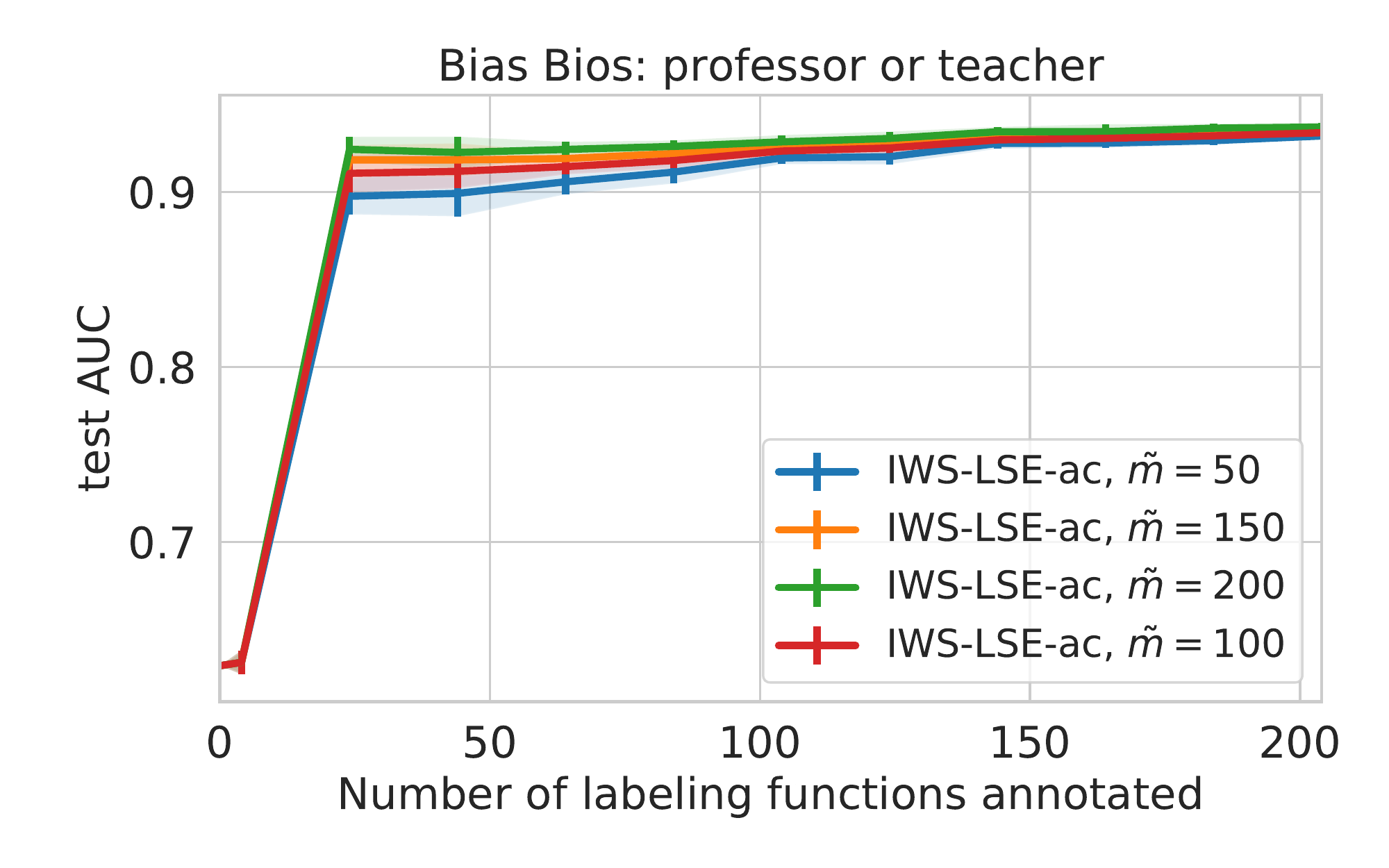}
    \caption{IWS-LSE-ac ablation plots for varying final sizes via parameter $ \tilde{m}$. Recall that we bound the size of the final set of LFs at each iteration $t$ by $m=\sum_{i=1}^{t-1}u_i+\tilde{m}$, i.e. the number of LFs so far annotated as $u=1$ plus a constant $\tilde{m}$.
    Note that the LSE-ac setting takes LF coverage into account to rank LFs according to $(2\alpha_j-1)*\hat{l}_j$ where $\alpha_j,\hat{l}_j$ are the estimated LF accuracy and observed LF coverage.}
    \label{fig:straddle_ablation}
\end{figure}

\newpage
\section{On Labeling Function Properties}
\label{sec:appendix02}
In this section we analyze how LF accuracy and LF propensity (i.e. non-abstain behavior) influence the estimate of the true latent class label $Y^*$.  We focus on the binary classification case for simplicity.  
Assume each data point $x \in \mathcal{X}$ has a latent class
label $y^* \in \mathcal{Y} = \{-1,1\}$.
Given $n$ unlabeled, i.i.d. data points $X = \{x_i\}_{i=1}^n$, our goal is to train a
classifier
$f: \mathcal{X} \rightarrow \mathcal{Y}$ such that $f(x) = y^*$. As in~\citet{ratner2016data} a user provides $m$ LFs $\{\lambda_j\}_{j=1}^m$, where $\lambda_j: \mathcal{X} \rightarrow \mathcal{Y} \cup \{0\}$  noisily label the data with $\lambda_j(x) \in \{-1, 1\}$ or abstain with $\lambda_j(x)=0$.
The corresponding LF output matrix is $ \Lambda \in \{-1,0,1\}^{n \times m}$, where $\Lambda_{i,j} = \lambda_j(x_i)$.

We define a factor graph as proposed in~\citet{ratner2016data,ratner2020snorkel} to obtain probabilistic labels by modeling the LF accuracies via factor  $\phi_{i,j}^{Acc}(\Lambda,Y) \triangleq \mathbbm{1}\{\Lambda_{ij}=y_i\}$ and labeling propensity by factor  $\phi_{i,j}^{Lab}(\Lambda,Y) \triangleq \mathbbm{1}\{\Lambda_{ij}\neq 0\}$, and for simplicity assume LFs are independent conditional on $Y$. The label model is defined as
\begin{align}
p_{\theta}( Y, \Lambda  )
\triangleq Z_{\theta}^{-1}
\exp \left(\sum_{i=1}^n \theta^\top \phi_i(\Lambda_i,y_i)\right),
\end{align}
where $Z_{\theta}$ is a normalizing constant and $\phi_i(\Lambda_i,y_i)$ defined to be the concatenation of the factors for all LFs $j = 1,\dots, m$ for sample $i$. Also, let  $ \theta = ( \theta^{(1)}, \theta^{(2)})$ where $\theta^{(1)},\theta^{(2)} \in \mathbb{R}^m$. Here, $\theta^{(1)}$ are the canonical parameters for the LF accuracies, and $\theta^{(2)}$  the canonical parameters for the LF propensities.

To estimate the label model parameters, we generally obtain the maximum marginal likelihood estimate via the (scaled) log likelihood
$$
    l(\theta) = 1/n \sum_{i=1}^n \log \left( \sum_{y \in \mathcal{Y}} p(\Lambda_i, y | \theta) \right).
$$
Let finite $ \hat{\theta}\in \mathbb{R}^{2m}$ be such an estimate. We use $p_{\hat{\theta}}(y| \Lambda_i)$ to obtain probabilistic labels:
\begin{align}
    p_{\hat{\theta}}(y_i=k| \Lambda_i) &= \frac{ p_{\hat{\theta}}( y_i=k, \Lambda_i)}{p_{\hat{\theta}}(\Lambda_i)} \\
    &= \frac{\exp(\sum_{j=1}^m \hat{\theta}^{(1)}_j \phi^{Acc}(\lambda_{j}(x_i),k))}{\sum_{\bm \tilde{y} \in \mathcal{Y}}  \exp(\sum_{j=1}^m \hat{\theta}^{(1)}_j\phi^{Acc}(\lambda_{j}(x_i),\tilde{y}))}.\label{eq:problabel}
\end{align}
Note that the label estimate does not directly depend on $\theta^{(2)}$. Further, note that the denominator is the same over  different label possibilities. Finally, note that even in a case where we include correlation factors $\phi_{i,j,k}^{Corr}(\Lambda,Y)=\mathbbm{1}\{\Lambda_{ij}=\Lambda_{ik}\}, (j,k)\in C $ in the model above with $C$ as a set of potential dependencies, the probabilistic label will only directly depend on the estimated canonical accuracy parameters $\theta^{(1)}$. 
In the binary classification case, which we assume here, the expression simplifies further. For $k\in\{-1,1\}$:
\begin{align}
    p_{\hat{\theta}}(y_i=k| \Lambda_i)
    &= \frac{\exp(\sum_{j=1}^m \hat{\theta}^{(1)}_j \phi^{Acc}(\lambda_{j}(x_i),k))}{\sum_{\bm \tilde{y} \in \{-1,1\}}  \exp(\sum_{j=1}^m \hat{\theta}^{(1)}_j\phi^{Acc}(\lambda_{j}(x_i),\tilde{y}))}\\
    &= \frac{1}{1+\exp(\sum_{j=1}^m \hat{\theta}^{(1)}_j(\phi^{Acc}(\lambda_{j}(x_i),-k)-\phi^{Acc}(\lambda_{j}(x_i),k)))}\\
    &= \sigma(\sum_{j=1}^m \hat{\theta}^{(1)}_j(\phi^{Acc}(\lambda_{j}(x_i),k)-\phi^{Acc}(\lambda_{j}(x_i),-k))),
    \label{eq:problabelbinary}
\end{align}
where $\sigma$ denotes the sigmoid function.
The probabilistic labels are a softmax in the multi-class classification case and, as shown above, simplify to a sigmoid in the binary case. An absolute label prediction $\hat{y}\in\{-1,1\}$ is therefore simply a function of 
$$\hat{y}_i =\argmax_{\tilde{y}\in \mathcal{Y}}  p_{\hat{\theta}}(y_i=\tilde{y}|\Lambda_i) = \argmax_{\tilde{y}\in  \mathcal{Y}} \sum_{j=1}^m \hat{\theta}^{(1)}_j \phi^{Acc}(\lambda_{j}(x_i),\tilde{y}).$$


We now introduce some assumptions on the accuracy and error probabilities of labeling functions, similar to the Homogenous Dawid-Skene model~\cite{dawid1979maximum,li2013error} in crowd sourcing, where label source accuracy is the same across classes and errors are evenly divided with probability mass \emph{independent} of the true class.

Under these assumptions, we denote by 
$\alpha_j = P(\lambda_j(x)=y^* | \lambda_j(x) \ne 0 )$  the accuracy of LF $j$. Further, we denote by $l_j = P(\lambda_j(x) \neq 0)$ the labeling propensity of $j$, i.e. how frequently LF $j$ does not abstain. The observed LF propensity is also referred to as LF coverage in the related literature. 
We recall \cref{thrm:dawidskene}:
\dawidskene*
\begin{proof}
Assume that we use the label model to obtain a label estimate $\hat{y}_i \in \{-1,1\}$. As shown in Eq.~(\ref{eq:problabel}), the prediction rule in that case is
$$\hat{y}_i = \argmax_{\tilde{y}\in \{-1,1\}} \sum_{j=1}^m \hat{\theta}^{(1)}_j \phi^{Acc}(\lambda_{j}(x_i),\tilde{y}).$$
Define by $\lambda(x)=(\lambda_1(x),\dots,\lambda_m(x))$ the vector of the $j=1,\dots,m$ LF outputs on $x$. Further, we define for $k \in \{-1,1\}$:
\begin{align*}
 V_{\hat{\theta}}(\lambda(x),k)&=\sum_{j=1}^m \hat{\theta}^{(1)}_j(\phi^{Acc}(\lambda_{j}(x),k)-\phi^{Acc}(\lambda_{j}(x),-k))\\
 &=\sum_{j=1}^m \hat{\theta}_j^{(1)} \left(\mathbbm{1}\{\lambda_j(x)=k\} - \mathbbm{1}\{\lambda_j(x)=-k\}\right).
\end{align*}
For the two label options $k\in\{-1,1\}$, we have
\begin{align*}
    V_{\hat{\theta}}(\lambda(x),1)&=\sum_{j=1}^m \hat{\theta}_j^{(1)} \left(\mathbbm{1}\{\lambda_j(x)=1\} - \mathbbm{1}\{\lambda_j(x)=-1\}\right) = \sum_{j=1}^m \hat{\theta}_j^{(1)} \lambda_j(x)\\
\end{align*}
and
\begin{align*}
    V_{\hat{\theta}}(\lambda(x),-1)&=\sum_{j=1}^m \hat{\theta}_j^{(1)} \left(\mathbbm{1}\{\lambda_j(x)=-1\} - \mathbbm{1}\{\lambda_j(x)=1\}\right) = - \sum_{j=1}^m \hat{\theta}_j^{(1)} \lambda_j(x).\\
\end{align*}
Now, we want to obtain a bound on the probability that the label estimate $\hat{y}_i$ is equal to the true label. We have
\begin{align*}
    P(\hat{y}_i=y_i^*) &=P(y_i^*=1)P(\hat{y}_i=1|y_i^*=1)+P(y_i^*=-1)P(\hat{y}_i=-1|y_i^*=-1)\\
    &=P(y_i^*=1)P(\hat{y}_i=1|y_i^*=1)+(1-P(y_i^*=1))P(\hat{y}_i=-1|y_i^*=-1).\\
\end{align*}
Note that 
\begin{align*}
    P(\hat{y}_i=1|y_i^*=1)&=P(V_{\hat{\theta}}(\lambda(x_i),1) >0 |y_i^*=1)=P(\sum_{j=1}^m \hat{\theta}_j^{(1)} \lambda_j(x)>0 |y_i^*=1),
\end{align*}
and that
\begin{align*}
    P(\hat{y}_i=-1|y_i^*=-1)&=P(V_{\hat{\theta}}(\lambda(x_i),-1) >0 |y_i^*=-1)=P(\sum_{j=1}^m \hat{\theta}_j^{(1)} \lambda_j(x_i) < 0 |y_i^*=-1).
\end{align*}
We therefore have
\begin{align*}
    P(\hat{y}_i=y_i^*) 
    &=P(y_i^*=1)P(\sum_{j=1}^m \hat{\theta}_j^{(1)} \lambda_j(x_i)>0 |y_i^*=1)+(1-P(y_i^*=1))P(\sum_{j=1}^m \hat{\theta}_j^{(1)} \lambda_j(x_i) < 0 |y_i^*=-1).
\end{align*}
Now we define $\xi_{ij}=\hat{\theta}^{(1)}_{j}\lambda_j(x_i)$ and we know that $\xi_j \in [-|\hat{\theta}^{(1)}_j|,|\hat{\theta}^{(1)}_j|]$. Given the Dawid-Skene model assumptions stated previously, we have
\begin{align*}
    \mathbb{E}[\sum_{j=1}^m \xi_{ij}|y_i^*=1] &=\sum_{j=1}^m \mathbb{E}[\xi_{ij}|y_i^*=1]= \sum_{j=1}^m \hat{\theta}^{(1)}_{j} l_j(2*\alpha_j-1),
\end{align*}
and
\begin{align*}
    \mathbb{E}[\sum_{j=1}^m \xi_{ij}|y_i^*=-1] &=\sum_{j=1}^m \mathbb{E}[\xi_{ij}|y_i^*=-1]= - \sum_{j=1}^m \hat{\theta}^{(1)}_{j} l_j(2*\alpha_j-1).
\end{align*}
Now, using Hoeffding's inequality and assuming independent labeling functions, we can bound  $P(\hat{y}_i=1|y_i^*=1)$ and $P(\hat{y}_i=-1|y_i^*=-1)$ from below:
\begin{align*}
P(\sum_{j=1}^m \hat{\theta}_j^{(1)} \lambda_j(x_i)>0 |y_i^*=1) &= P(\sum_{j=1}^m \xi_{ij}>0 |y_i^*=1)\\
&= P(\sum_{j=1}^m \xi_{ij} - \mathbb{E}[\sum_{j=1}^m \xi_{ij}|y_i^*=1] > -\sum_{j=1}^m \hat{\theta}^{(1)}_{j} l_j(2*\alpha_j-1)\ |y_i^*=1)\\
&\geq 1-\exp\left(-\frac{(\sum_{j=1}^m \hat{\theta}_j^{(1)}(2\alpha_j-1)l_j)^2}{2||\hat{\theta}^{(1)}||^2}\right),
\end{align*}
and
\begin{align*}
P(\sum_{j=1}^m \hat{\theta}_j^{(1)} \lambda_j(x_i) < 0 |y_i^*=-1) &= P(\sum_{j=1}^m \xi_{ij} < 0 |y_i^*=-1)\\
&= P(\sum_{j=1}^m \xi_{ij} - \mathbb{E}[\sum_{j=1}^m \xi_{ij}|y_i^*=-1] <  \sum_{j=1}^m \hat{\theta}^{(1)}_{j} l_j(2*\alpha_j-1)\ |y_i^*=-1)\\
&\geq 1-\exp\left(-\frac{(\sum_{j=1}^m \hat{\theta}_j^{(1)}(2\alpha_j-1)l_j)^2}{2||\hat{\theta}^{(1)}||^2}\right).
\end{align*}


Finally we have 
\begin{align*}
    P(\hat{y}_i=y_i^*)
    &=P(y_i^*=1)P(\hat{y}_i=1|y_i^*=1)+(1-P(y_i^*=1))P(\hat{y}_i=-1|y_i^*=-1)\\
    &\geq 1-\exp\left(-\frac{(\sum_{j=1}^m \hat{\theta}_j^{(1)}(2\alpha_j-1)l_j)^2}{2||\hat{\theta}^{(1)}||^2}\right).
\end{align*}
\end{proof}

What do the theorem and the quantities analyzed in this section indicate?
\begin{itemize}
    \item The trade-off between LF accuracy and LF propensity (also referred to as LF coverage) is captured by $(2\alpha_j-1)l_j$ which allows us to rank LFs if we know the accuracy $\alpha_j$ or can estimate it and use the observed, empirical coverage as an estimate of $l_j$.
    \item Not surprising, the relation between $\text{sign}(\theta_j)$ and $\alpha_j$ is important. A better than random LF $j$ should have a positive $\theta_j$. This indicates that a gap to randomness is important if we cannot guarantee that we learn $\theta_j$ well, to reduce the chance of obtaining a negative $\theta_j$ for better than random LF $j$, or vice versa. 
    \item Note how the label estimates are obtained in Eq.~(\ref{eq:problabel}). Increasing the $\theta_j$ of am LF also effectively means reducing the impact other LFs have on a prediction. In particular when $\theta$ estimates are imperfect, a gap to random accuracy of $\alpha_j$ is important to obtain good label estimates. Intuitively, we do not want to add excessive noise by including LFs close to random unless we can guarantee that their parameter estimate is appropriately low and has the correct sign. 
\end{itemize}

\subsection{What are the optimal label model parameters?}
Here we discuss the optimal $\theta$ parameters for two cases: first for when we assume that the label model factor graph consists only of accuracy factors and second for when this label model also takes LF propensity into account.

\subsubsection{Assuming accuracy factors only}
In the previous section we assumed that we are given estimated $\theta$ parameters. Naturally, we may next ask ourselves what the optimal theta parameters are. 
Let us start with a simple case. Let $Y$, $H$ be random variables of the class variable and vector of LFs, respectively. 
Assume that we only model LF accuracy and define  $\mathbbm{1}_{\{\lambda(x)=y\}}$ to be the element-wise indicator function. Also, assume that the true distribution can be expressed by the following model:
\begin{align}
    P(Y=y,H=\lambda(x) ; \theta) &= \frac{1}{Z}\exp\left(\theta^\top\mathbbm{1}_{\{\lambda(x)=y\}}\right) \\
    Z &= \sum_{y,\lambda(x)} \exp\left(\theta^\top\mathbbm{1}_{\{\lambda(x)=y\}}\right).
\end{align}
Note that $Z$ can be written as
\begin{align*}
    Z &= \sum_{y,\lambda(x) | \lambda(x)_j=y} \exp\left(\theta^\top\mathbbm{1}_{\{\lambda(x)=y\}}\right) + \sum_{y,\lambda(x) | \lambda(x)_j\ne y} \exp\left(\theta^\top\mathbbm{1}_{\{\lambda(x)=y\}}\right) \\
    &= \sum_{y,\lambda(x) | \lambda(x)_j=y} \exp\left(\theta_j+\theta_{-i}^\top\mathbbm{1}_{\{\lambda(x)_{-i}=y\}}\right) +  \sum_{y,\lambda(x) | \lambda(x)_j\ne y} \exp\left(\theta_{-i}^\top\mathbbm{1}_{\{\lambda(x)_{-i}=y\}}\right) \\
    &= (e^{\theta_j}+L)\sum_{y,\lambda(x)_{-i}} \exp\left(\theta_{-i}^\top\mathbbm{1}_{\{\lambda(x)_{-i}=y\}}\right)
\end{align*}
where $L$ is the number of classes.
Now, note that
\begin{align}
P(H_j=Y) &= \sum_{y,\lambda(x) | \lambda(x)_j=y} P(Y=y,H=\lambda(x)) \\
    &= \sum_{y,\lambda(x) | \lambda(x)_j=y} \frac{1}{Z}\exp\left(\theta^\top\mathbbm{1}_{\{\lambda(x)=y\}}\right) \\
    &= \frac{e^{\theta_j}}{Z}\sum_{y,\lambda(x)_{-i}} \exp\left(\theta_{-i}^\top\mathbbm{1}_{\{\lambda(x)_{-i}=y\}}\right)
\end{align}
Thus, $P(H_j=Y) = \frac{e^{\theta_j}}{e^{\theta_j}+L}$.
Which implies
\begin{equation}
  \theta_j = \ln\left(\frac{P(H_j=Y)L}{1-P(H_j=Y)}\right).
\end{equation}
Further, note that  $P(H_j=Y)=P(H_j=Y,H_j \neq 0) = P(H_j=Y | H_j \neq 0 )P(H_j \neq 0)$ is a combination of an LF's accuracy and label propensity.

\subsubsection{Assuming accuracy factors and LF propensity factors}
We now analyze a slightly more elaborate label model which also models LF propensity. Again, Let $Y,H$ denote the random variables for class labels and LFs, respectively. 
\begin{align}
    P(Y=y,H=\lambda(x) ; \theta,) &= \frac{1}{Z}\exp\left(\theta^{(1)\top}\mathbbm{1}_{\{\lambda(x)=y\}}+\theta^{(2)\top}\mathbbm{1}_{\{\lambda(x) \neq 0\}}\right) \\
    Z &= \sum_{y,\lambda(x)} \exp\left(\theta^{(1)\top}\mathbbm{1}_{\{\lambda(x)=y\}}+\theta^{(2)\top}\mathbbm{1}_{\{\lambda(x)\neq 0\}}\right)
\end{align}
$Z$ can be written as
\begin{align*}
    Z &= \sum_{y,\lambda(x) | \lambda(x)_j=y} \exp\left(\theta^{(1)\top}\mathbbm{1}_{\{\lambda(x)=y\}}+\theta^{(2)\top}\mathbbm{1}_{\{\lambda(x) \neq 0\}}\right) + \sum_{y,\lambda(x) | \lambda(x)_j\ne y} \exp\left(\theta^{(1)\top}\mathbbm{1}_{\{\lambda(x)=y\}}+\theta^{(2)\top}\mathbbm{1}_{\{\lambda(x) \neq 0\}}\right) \\
    &= \sum_{y,\lambda(x) | \lambda(x)_j=y} \exp\left(\theta_j^{(1)}+\theta_j^{(2)}+\theta_{-i}^{(1)\top}\mathbbm{1}_{\{\lambda(x)_{-i}=y\}}+\theta_{-i}^{(2)\top}\mathbbm{1}_{\{\lambda(x)_{-i}\neq 0\}}\right)\\
    & +  \sum_{y,\lambda(x) | \lambda(x)_j\ne y,  \lambda(x)_j\ne 0} \exp\left(\theta_j^{(2)}+\theta_{-i}^{(1)\top}\mathbbm{1}_{\{\lambda(x)_{-i}=y\}}+\theta_{-i}^{(2)\top}\mathbbm{1}_{\{\lambda(x)_{-i}\neq 0\}}\right) \\
    & +  \sum_{y,\lambda(x) | \lambda(x)_j= 0} \exp\left(\theta_{-i}^{(1)\top}\mathbbm{1}_{\{\lambda(x)_{-i}=y\}}+\theta_{-i}^{(2)\top}\mathbbm{1}_{\{\lambda(x)_{-i}\neq 0\}}\right)\\
	&= (e^{\theta^{(1)}_j+\theta^{(2)}_j}+(L-1)e^{\theta^{(2)}_j}+1)\sum_{y,\lambda(x)_{-i}} \exp\left(\theta_{-i}^{(1)\top}\mathbbm{1}_{\{\lambda(x)_{-i}=y\}}+\theta_{-i}^{(2)\top}\mathbbm{1}_{\{\lambda(x)_{-i} \neq 0\}}\right)
\end{align*}
The likelihood of a correct vote is given by
\begin{align}
P(H_j=Y) &= \sum_{y,\lambda(x) | \lambda(x)_j=y} P(Y=y,H=\lambda(x)) \\
    &= \sum_{y,\lambda(x) | \lambda(x)_j=y}  \frac{1}{Z}\exp\left(\theta^{(1)\top}\mathbbm{1}_{\{\lambda(x)=y\}}+\theta^{(2)\top}\mathbbm{1}_{\{\lambda(x) \neq 0\}}\right) \\
    &= \frac{e^{\theta^{(1)}_j+\theta^{(2)}_j}}{Z}\sum_{y,\lambda(x)_{-i}} \exp\left(\theta_{-i}^{(1)\top}\mathbbm{1}_{\{\lambda(x)_{-i}=y\}}+\theta_{-i}^{(2)\top}\mathbbm{1}_{\{\lambda(x)_{-i}\neq y\}}\right)
\end{align}
Further note that
\begin{align*}
    P(H_j=0) &= \sum_{y,\lambda(x) | \lambda(x)_j=0} P(Y=y,H=\lambda(x)) \\
    &= \sum_{y,\lambda(x) | \lambda(x)_j=0}  \frac{1}{Z}\exp\left(\theta^{(1)\top}\mathbbm{1}_{\{\lambda(x)=y\}}+\theta^{(2)\top}\mathbbm{1}_{\{\lambda(x) \neq 0\}}\right)\\
	&= \frac{1}{Z}\sum_{y,\lambda(x)_{-i}} \exp\left(\theta_{-i}^{(1)\top}\mathbbm{1}_{\{\lambda(x)_{-i}=y\}}+\theta_{-i}^{(2)\top}\mathbbm{1}_{\{\lambda(x)_{-i}\neq y\}}\right)\\
	&= \frac{1}{e^{\theta^{(1)}_j+\theta^{(2)}_j}+(L-1)e^{\theta^{(2)}_j}+1}
\end{align*}
So we can express
\begin{align}
	P(H_j=Y|H_j\ne 0) &= \frac{P(H_j=Y,H_j\ne 0)}{P(H_j\ne 0)} = \frac{P(H_j=Y)}{P(H_j\ne 0)}=\frac{P(H_j=Y)}{1-P(H_j= 0)}\\
	&=\frac{e^{\theta^{(1)}_j+\theta^{(2)}_j}}{e^{\theta^{(1)}_j+\theta^{(2)}_j}+(L-1)e^{\theta^{(2)}_j}}\\
	P(H_j\ne 0) &= 1-\frac{1}{e^{\theta^{(1)}_j+\theta^{(2)}_j}+(L-1)e^{\theta^{(2)}_j}+1}
\end{align}
Solving for $\theta^{(1)}_j$ and $\theta^{(2)}_j$, we find
\begin{align}
	\theta^{(1)*}_j &= \ln\left(\frac{(L-1)\alpha_j}{1-\alpha_j}\right)\\
	\theta^{(2)*}_j &= \ln\left(\frac{(1-\alpha_j)l_j}{(L-1)(1-l_j)}\right),
\end{align}
where $\alpha_j=P(H_j=Y|H_j\ne 0)$ and $l_j=P(H_j\ne 0)$. Note that in the binary case $\theta^{(1)*}_j$ is positive only when $\alpha_j>0.5$. 
\\

\end{document}